\documentclass{article}%

%
%


\usepackage[round]{natbib}


\usepackage{epsfig}
\usepackage{enumerate,graphicx}
\usepackage{amsfonts}
\usepackage{graphicx,color}
\usepackage{amsmath,amsthm,amssymb}
\usepackage{amssymb}

\newtheorem{thm}{Theorem}[section]
\newtheorem{lemma}{Lemma}[section]
\newtheorem{prop}{Proposition}[section]
\newtheorem{cor}{Corollary}[section]
\newtheorem{Defi}{Definition}[section]

\newtheorem{exm}{Example}[section]
\theoremstyle{remark}
\newtheorem{rem}[thm]{\bf Remark}
\newcommand{\eref}[1]{(\ref{#1})}
\allowdisplaybreaks

\begin{document}

%

%

\title{Exponential inequalities for nonstationary Markov Chains}

\author{Pierre Alquier\footnote{CREST, ENSAE, Universit\'{e} Paris Saclay. Pierre Alquier's work has been supported by GENES and  by the French National Research Agency (ANR) under the grant Labex Ecodec (ANR-11-LABEX-0047).}, Paul Doukhan\footnote{AGM UMR8088 University Paris-Seine and CIMFAV, Universidad de Valparaiso, Chile. Paul Doukhan's work  has been developed within the MME-DII center of excellence (ANR-11-LABEX-0023-01) \& PAI-CONICYT MEC $N^o 80170072$.} and Xiequan Fan\footnote{CAM, Tianjin University, Tianjin, China. Fan Xiequan has been partially supported by the National Natural Science Foundation of China (Grant $N^o 11601375$.}}

\maketitle

\begin{abstract}
Exponential inequalities are main tools in machine learning theory. To prove exponential inequalities for non i.i.d random variables allows to extend many learning techniques to these variables. Indeed, much work has been done both on inequalities and learning theory for time series,  in the past 15 years. However, for the non independent case, almost all the results concern stationary time series. This excludes many important applications: for example any series with a periodic behaviour is nonstationary. In this paper, we extend the basic tools of~\cite{DF15} to nonstationary Markov chains. As an application, we provide a Bernstein-type inequality, and we deduce risk bounds for the prediction of periodic autoregressive processes with an unknown period.
\end{abstract}

\section{Introduction}

Exponential inequalities are corner stones of machine learning theory. For example, distribution free generalization bounds were proven by Vapnik and Cervonenkis based on Hoeffding's inequality, see~\cite{Va98}. Model selection bounds in~\cite{M07} also rely on exponential moment inequalities.

To prove such inequalities in the non i.i.d setting is thus crucial to study the generalization ability of machine learning algorithms on time series. As an example, a Bernstein type inequality for $\alpha$-mixing time series is proven in~\cite{MM02}. This result is used by~\cite{SC09} to prove generalization bounds for general learning problems with $\alpha$-mixing observations.

Exponential inequalities and machine learning with non-i.i.d observations actually became an important research direction, a more detailed list of references is given below. However, most of these references assume stationarity. That is, only the independence assumption was removed. The observations are still assumed to be identically distributed, or at least erdogic. This excludes many applications: in addition to trends,  data related to a human activity such as in industry or economics has some periodicity (hourly, daily, yearly\dots) and some regime switching; the same remark applies to  data with a physical origin, such as in geology, astrophysics\dots

In this paper, we generalize the inequalities proven by~\cite{DF15} for time homogeneous Markov chains to non-homogeneous chains. This allows to study a large set of nonstationary processes. We obtain Bernstein and McDiarmid inequalities as well as moments inequalities. As an application, we study periodic autoregressive processes of the form $X_{t} = f_t (X_{t-1}) + \varepsilon_t$ where $f_{t+T}=f_{t}$ for any $t$, for some period $T$. Thanks to our version of Bernstein's inequality we show that the Empirical Risk Minimizer (ERM) leads to consistent predictions in this setting. We also show that a penalized version of the ERM enjoys the same property even when $T$ is unknown.

The paper is organized as follows. The rest of this introduction is dedicated to a state-of-the-art on exponential inequalities for time series. Section~\ref{section:notations} introduces the notations and assumptions that will be used in the whole paper. In Section~\ref{section:theorems}, we state an extension of Proposition 2.1 of~\cite{DF15}: this is Lemma~\ref{McD}. As a proof of concept, we use this lemma to prove a version of Bernstein inequality for nonstationary Markov chains. We also provide Cramer and McDiarmid inequalities based on this lemma. We study periodic autoregressive series in Section~\ref{section:autoregression}. Finally, Section~\ref{section:proof} contains the proof of Lemma~\ref{McD} and of the results in Section~\ref{section:autoregression}.

\subsection{State of the art}

We refer the reader to~\cite{Bo13} for an overview on exponential and concentration inequalities in the i.i.d case. This book also provides references for applications of these results to machine learning theory.

Exponential inequalities were proven for time under a various range of assumptions. We refer the reader to~\cite{D18} for various approaches on modelling time series.

Inequalities for Markov chains $(X_t)_{t\geq 1}$ are proven in~\cite{Cat03,A08,BC10,JO10,Win17,BP18,P18,BC19}. Note that most of these inequalities require the chain to be time homogeneous. While this does not imply the chain to be stationary, in some sense the $X_t$'s are asymptotically identically distributed in these papers. For example, consider the powerful renewal technique used in~\cite{BC19} to prove a version of Bernstein inequality. The proof is based on the fact that blocks $(X_{\tau_i},\dots,X_{\tau_{i+1}-1})$ between two {\it renewal times} $\tau_i$ and $\tau_{i+1}$ are actually i.i.d. It is thus possible to apply the i.i.d version of Bernstein inequality to these blocks. The spectral technique used in~\cite{P18} still relies on the ergodicity of the Markov chain (we thank the anonymous Referee for pointing out some of these references). Exponential inequalities for hidden Markov chains are given in~\cite{KR08}.

It is a well-known fact that Hoeffding's inequality is not only valid for independent observations, but also for martingales increments (it is sometimes refered to as Hoeffding-Azuma inequality in this case). To decompose a function of the process as a sum of martingales increments is actually one of the most powerful techniques to prove exponential inequalities, see Chapter 3 in~\cite{Bo13}. More exponential inequalities for martingales can be found in~\cite{S+12,R13b,BDR15}. This technique is actually used by~\cite{DF15,Fx2} to prove exponential inequalities for Markov chains.

Markov chains are extremely useful in modelisation and simulations, however, many time series have a very different dependence structure. Mixing coefficients allow to quantify the dependence between observations without giving an explicit structure on this dependence. We refer the reader to~\cite{R17} for a comprehensive introduction. Exponential inequalities for mixing processes are proven in~\cite{Sam00,RPR09,R17,HS17}. Mixing series however exclude many stochastic processes, as discussed in the monograph~\cite{DDp07}. Weak dependance coefficients cover a wider range of processes for which Bernstein type inequalities are proven for example in~\cite{CMS,DN07,Win10,RPR10,BZ17}. Dynamic systems are examples of processes where only $X_1$ is random, each $X_t$ is then a deterministic fonction of $X_{t-1}$. Based on weak dependence arguments, it is possible to prove exponential inequalities for such processes~\cite{CMS}.

Based on such inequalities, it is possible to prove generalization bounds for machine learning algorithms~\cite{SHS09,SC09,SK13,LHTG14,HS14,SP15,KM15,MSS17,AB18}. Model selection techniques in the spirit of~\cite{M07} are studied in~\cite{Mei00,L11,AW12}, and aggregation of estimators in~\cite{ALW13}.

In this paper we prove provide tools to prove exponential inequalities for nonstationary, non homogeneous Markov chains. Rather than the renewal or spectral techniques discussed above, we extend the martingale approach of~\cite{DF15} to non-homogeneous chains.

\section{Notations}
\label{section:notations}

From now, all the random variables are defined on a probability space $(\Omega, \mathcal{A}, \mathbb{P})$. Let $({\mathcal X}, d)$ and  $({\mathcal Y}, \delta)$ be two complete separable metric spaces. Let $(\varepsilon_t)_{t \geq 2}$ be a sequence of i.i.d ${\mathcal Y}$-valued random variables. Let $X_1$ be a ${\mathcal X}$-valued random variable independent of $(\varepsilon_t)_{t \geq 2}$. Let $(X_t)_{t \geq 1}$ be the Markov chain given by
\begin{equation}\label{Mchain}
X_t=F_t(X_{t-1}, \varepsilon_t), \quad \text{for $t\geq 2$},
\end{equation}
where the functions $F_t: {\mathcal X} \times {\mathcal Y} \rightarrow {\mathcal X}$ are such that
\begin{equation}\label{contract}
\sup_{t} {\mathbb E}\big [ d\big(F_t(x, \varepsilon_1), F_t(x', \varepsilon_1)\big) \big] \leq
\rho d(x, x')
\end{equation}
for some constant $\rho \in [0,1)$, and
\begin{equation}\label{c2}
 \sup_t d(F_t(x,y), F_t(x,y')) \leq C \delta(y,y')
\end{equation}
for some  constant $C>0$. In particular, when $F_t \equiv F,$ this is the model studied by~\cite{DF15}.

This class of Markov chains, that we call ``one-step contracting", contains a lot
of pertinent examples. The classical AR(1)-process is given by $X_t = F(X_{t-1},\varepsilon_t)$ where $F(x,y) = ax+y$. Condition~\eqref{c2} is satisfied, and Condition~\eqref{contract} will be satisfied as soon as $|a|<1$. Now, consider a time-varying AR(1) process:
$$
X_t = a_t X_{t-1} + \varepsilon_t.
$$
This process may be non-stationary. Condition~\eqref{c2} is still satisfied, and
$
 |F_t(x,y) - F_t(x',y)|  \leq |a_t| |x-x'|
$
so Condition~\eqref{contract} will be satisfied as soon as $\sup_t |a_t| <1$. This process is studied  by~\cite{BD17} under various assumptions: local stationarity, that means a slow variation of $a_t$ as a function of $t$, see~\cite{Da96}, and periodicity, that is, for any $t$: $a_{t+T}=a_t$ for some (known) period $T$. If $T$ is unknown,  a cross-validation procedure to estimate $T$ is  proposed (Remark 2.4) without a consistency result. Below we will propose a penalized procedure with some statistical guarantees.

As a much more general example,  consider  the following functional auto-regressive model. Let ${\mathcal X}$ be a separable Banach space with norm $|\cdot|$. The functional auto-regressive model is defined by
$$
X_t=f_t(X_{t-1}) +  \varepsilon_t \, ,
$$
where $f_t : {\mathcal X}  \rightarrow {\mathcal X}$ is such that
$$
  |f_t(x)-f_t(x')|\leq \rho |x-x'| .
$$
Clearly  (\ref{Mchain}) and (\ref{contract}) are satisfied once $\rho \in [0, 1)$, see \cite{DF99} for more examples.

We introduce the natural filtration of the chain ${\mathcal F}_0=\{\emptyset, \Omega \}$ and for $ t \in \mathbb{N}$, 
$ {\mathcal F}_t = \sigma(X_1, X_2,  \ldots, X_t)$.

Consider a separately Lipschitz function $f: {\mathcal X}^n \to {\mathbb R}$ such that
\begin{equation*}
|f(x_1, \ldots, x_n)-f(x'_1, \ldots, x'_n)| \leq \sum_{t=1}^n d(x_t,x'_t) \, .
\end{equation*}
We define
\begin{equation}\label{Sn}
S_n:=f(X_1, \ldots , X_n) -{\mathbb E}[f(X_1, \ldots , X_n)]\, .
\end{equation}
The objective of what follows will be to derive inequalities on the tails of $ \mathbb{P}(|S_n|>x) $.

\section{Main results}
\label{section:theorems}

\cite{DF15} proved several exponential and moments inequalities if  $F_t \equiv F$, by using  a martingale decomposition. We will first extend this martingale decomposition to the general case. As an example, we will use it to prove a Bernstein type inequality. Other inequalities are given in the appendix. Here $(X_t)_{t \geq 1}$ will be a Markov chain satisfying \eref{Mchain} for some functions $(F_t)_{t \geq 2}$ satisfying \eref{contract}.

\subsection{Main lemma: martingale decomposition}

Set $K_t(\rho)=(1-\rho^{t+1})/(1-\rho)$ 
 for  $t \ge0,\rho \in  [0, 1)$ and
$$
g_k(X_1, \ldots , X_k)= {\mathbb E}[f(X_1, \ldots, X_n)|{\mathcal F}_k]\, ,
$$
and
$$
d_k=g_k(X_1, \ldots, X_k)-g_{k-1}(X_1, \ldots, X_{k-1})\, .
$$
 Define
$
S_t:=d_1+d_2+\cdots + d_t
$, for $t \in [1, n-1]$,
and note that the functional $S_n$ introduced in \eref{Sn} satisfies indeed $ S_n =d_1+d_2+\cdots + d_n$.
Thus $(S_t)$ is a martingale adapted to the filtration ${\mathcal F}_t$, and $(d_t)$ is the martingale difference of $(S_t)$.

Let $P_{X_1}$ denote the distribution of $X_1$ and $P_\varepsilon$ the (common) distribution of the $\varepsilon_t$'s. Let $G_{X_1}$, $G_\varepsilon$ and $H_{t, \varepsilon}$ be defined by
$$
G_{X_1}(x)=\int d(x, x') P_{X_1}(dx'), $$
$$
G_{\varepsilon}(y)= \int C\delta(y,y')P_{\varepsilon}(dy') \text{ and}
$$
$$
H_{t,\varepsilon}(x,y)= \int d(F_t(x,y),F_t(x,y'))P_{\varepsilon}(dy') \, .
$$

We are now in position to state our main lemma.
\begin{lemma}\label{McD} Assume \eqref{Mchain} and  \eqref{contract}, then:
\textcolor{white}{bla}
\begin{enumerate}
\item The function $g_t$ is separately Lipschitz and
$$
|g_t(x_1, \ldots, x_t)-g_t(x'_1, \ldots, x'_t)|
\leq \sum_{i=1}^{t-1} d(x_i,x_i')+ K_{n-t}(\rho) d(x_t, x'_t) \, .
$$
\item The martingale difference $(d_t)$ is such that
\begin{align*}
   |d_1| & \leq K_{n-1}(\rho) G_{X_1}(X_1) \, ,
\\
  \forall t\in[2,n]\text{, } |d_t| & \leq K_{n-t}(\rho)  H_{t,\varepsilon}(X_{t-1}, \varepsilon_t)\, .
\end{align*}
\item
Assume moreover that the $F_t$'s satisfy~\eref{c2}. Then $H_{t,\varepsilon}(x,y) \leq G_\varepsilon(y)$,
and consequently, for $t \in [2,n]$,
$$
 |d_t|\leq K_{n-t}(\rho)  G_\varepsilon(\varepsilon_t)\, .
$$
\end{enumerate}
\end{lemma}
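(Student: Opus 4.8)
The plan is to prove the three items in order, each building on the previous one, using the contraction property \eqref{contract} together with the Markov structure \eqref{Mchain} and repeated conditioning.

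For item 1, I would argue by backward induction on $t$, from $t=n$ down to $t=1$. The base case $t=n$ is exactly the hypothesis on $f$. For the inductive step, write $g_{t}(x_1,\dots,x_t) = \mathbb{E}\big[g_{t+1}(x_1,\dots,x_t, X_{t+1})\big]$ where, conditionally on $X_t = x_t$, the variable $X_{t+1}$ is distributed as $F_{t+1}(x_t,\varepsilon_{t+1})$. To compare $g_t(x_1,\dots,x_t)$ with $g_t(x'_1,\dots,x'_t)$, I would change one coordinate at a time. Changing $x_i$ to $x'_i$ for $i<t$ contributes $d(x_i,x'_i)$ directly via the inductive Lipschitz bound on $g_{t+1}$ (the coefficient on the $i$-th coordinate is $1$). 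The delicate coordinate is $x_t$: here I use a coupling of $F_{t+1}(x_t,\varepsilon_{t+1})$ and $F_{t+1}(x'_t,\varepsilon_{t+1})$ through the same noise $\varepsilon_{t+1}$, so that by the inductive hypothesis and \eqref{contract},
\begin{align*}
\big|g_t(\dots,x_t) - g_t(\dots,x'_t)\big|
&\leq \mathbb{E}\big[ K_{n-t-1}(\rho)\, d\big(F_{t+1}(x_t,\varepsilon_{t+1}), F_{t+1}(x'_t,\varepsilon_{t+1})\big)\big]
\leq K_{n-t-1}(\rho)\, \rho\, d(x_t,x'_t).
\end{align*}
Then one checks the recursion $1 + \rho K_{n-t-1}(\rho) = K_{n-t}(\rho)$, which holds since $K_m(\rho) = 1 + \rho + \cdots + \rho^m$, giving the claimed coefficient $K_{n-t}(\rho)$ on the $t$-th coordinate. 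I expect this coordinate-swap bookkeeping, and getting the index shift in $K$ right, to be the main technical obstacle, though it is essentially routine.

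For item 2, note that $d_t = g_t(X_1,\dots,X_t) - g_{t-1}(X_1,\dots,X_{t-1})$ and that $g_{t-1}(X_1,\dots,X_{t-1}) = \mathbb{E}[g_t(X_1,\dots,X_{t-1},X_t)\mid \mathcal{F}_{t-1}]$. For $t\geq 2$, conditionally on $\mathcal{F}_{t-1}$ the variable $X_t = F_t(X_{t-1},\varepsilon_t)$ has the law of $F_t(X_{t-1},\varepsilon')$ for an independent copy $\varepsilon'$ of $\varepsilon_t$; hence
$$
|d_t| = \Big| g_t(X_1,\dots,X_t) - \mathbb{E}'\big[g_t(X_1,\dots,X_{t-1},F_t(X_{t-1},\varepsilon'))\big]\Big|
\leq \mathbb{E}'\big[ K_{n-t}(\rho)\, d\big(F_t(X_{t-1},\varepsilon_t), F_t(X_{t-1},\varepsilon')\big)\big],
$$
using item 1 (only the last coordinate changes, so only the $K_{n-t}(\rho)$ term appears). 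The right-hand side is exactly $K_{n-t}(\rho)\, H_{t,\varepsilon}(X_{t-1},\varepsilon_t)$ by the definition of $H_{t,\varepsilon}$. For $t=1$, $d_1 = g_1(X_1) - \mathbb{E}[g_1(X_1)]$, and item 1 gives $g_1$ is $K_{n-1}(\rho)$-Lipschitz, so $|d_1| \leq K_{n-1}(\rho)\,\mathbb{E}'[d(X_1,X_1')] = K_{n-1}(\rho)\, G_{X_1}(X_1)$.

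For item 3, under \eqref{c2} we have pointwise $d(F_t(x,y),F_t(x,y')) \leq C\,\delta(y,y')$, so integrating against $P_\varepsilon(dy')$ gives $H_{t,\varepsilon}(x,y) \leq \int C\,\delta(y,y')\,P_\varepsilon(dy') = G_\varepsilon(y)$; substituting $x = X_{t-1}$, $y = \varepsilon_t$ into the bound from item 2 yields $|d_t| \leq K_{n-t}(\rho)\, G_\varepsilon(\varepsilon_t)$. This last step is immediate once items 1 and 2 are in place.
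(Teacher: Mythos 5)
Your proof is correct and follows essentially the same route as the paper's: backward induction on $t$ using the recursion $1+\rho K_{n-t-1}(\rho)=K_{n-t}(\rho)$ for Point 1, the representation of the conditional expectation via an independent copy of the noise combined with Point 1 for Point 2, and pointwise domination from \eqref{c2} for Point 3. The only nitpick is that your displayed bound for the $t$-th coordinate shows only the contribution propagated through $F_{t+1}(x_t,\cdot)$ and omits the direct $d(x_t,x'_t)$ term coming from the $t$-th slot of $g_{t+1}$, but the recursion you invoke immediately afterwards makes clear you are accounting for it.
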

The proof of this lemma is given in Section~\ref{section:proof}. First, we want to show that the inequalities in this lemma can be used to prove exponential inequalities on $S_n$.

\subsection{Application: Bernstein inequality}

Note that~\cite{V95} and~\cite{D99} obtained some tight Bernstein type inequalities for  martingales. Here, we can use the martingale decomposition and apply Lemma~\ref{McD} to obtain the following result.

\begin{thm}\label{Bernsteinineq}
Assume that there exist some constants  $M>0, V_1\geq 0$ and $V_2\geq 0$ such that, for any integer $k\geq 2$,
\begin{equation}
\label{ass:moment}
{\mathbb E} \Big[   G_{X_1}(X_1)^k\Big] \leq \frac {k!}{2} V_1 M^{k-2} \text{,  and }
 {\mathbb E} \Big[   G_\varepsilon(\varepsilon)^k\Big]  \leq
 \frac {k!}{2} V_2 M^{k-2} \, .
\end{equation}
Let $\delta=MK_{n-1}(\rho)$ and
$$V_{(n)} =  V_1\Big(K_{n-1}(\rho)\Big)^2 +  V_2 \sum_{k=2}^n \Big(K_{n-k}(\rho)\Big)^2 .$$
Then, for any $s \in [0, \delta^{-1})$,
\begin{equation}\label{maindfs}
  \mathbb{E}\,[e^ {\pm  sS_n} ]\leq \exp \left (\frac{s^2 V_{(n)} }{2 (1- s\,\delta )} \right )\, .
\end{equation}
Consequently, for any $x> 0$,
\begin{align*}
  {\mathbb P}\big(\pm  S_n \geq  x\big)
  &\leq  \exp \left( \frac{-x^2}{V_{(n)}(1+\sqrt{1+2x \delta/V_{(n)}})+x \delta }\right)\,  \\
 &\leq \exp \left( \frac{-x^2}{2 \,(V_{(n)} +x  \delta ) }\right)\, .
\end{align*}
\end{thm}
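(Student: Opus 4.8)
The plan is to combine the martingale decomposition from Lemma~\ref{McD} with the classical Bernstein argument for martingales: control the conditional moment generating function of each increment $d_t$, multiply the bounds, and optimize. First I would fix $s \in [0,\delta^{-1})$ and write $S_n = \sum_{t=1}^n d_t$ with $(d_t)$ the martingale difference sequence adapted to $(\mathcal F_t)$. Since ${\mathbb E}[d_t \mid \mathcal F_{t-1}] = 0$, a standard expansion of the exponential gives, for any centered random variable $Z$ bounded in the Bernstein sense, ${\mathbb E}[e^{sZ}\mid \mathcal F_{t-1}] \leq \exp\!\big(\tfrac{s^2 \operatorname{Var}\text{-type bound}}{2(1-s\,\cdot\,)}\big)$; concretely, using $e^u \le 1 + u + \tfrac{u^2}{2}\sum_{j\ge 0} u^j/ (j!/2\cdot\text{something})$ one gets that if ${\mathbb E}[|d_t|^k\mid\mathcal F_{t-1}] \le \tfrac{k!}{2} w_t m^{k-2}$ for all $k\ge 2$, then ${\mathbb E}[e^{\pm s d_t}\mid \mathcal F_{t-1}] \le \exp\!\big(\tfrac{s^2 w_t}{2(1-s m)}\big)$ for $s\in[0,1/m)$. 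This is the one-increment Bernstein lemma, which I would either cite (it is in~\cite{Bo13}) or prove in two lines from the moment assumption.

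Next I would verify that each $d_t$ satisfies such a moment bound with $m = \delta = M K_{n-1}(\rho)$ and appropriate weights $w_t$. By part~3 of Lemma~\ref{McD}, $|d_1| \le K_{n-1}(\rho) G_{X_1}(X_1)$ and $|d_t| \le K_{n-t}(\rho) G_\varepsilon(\varepsilon_t)$ for $t\ge 2$. Hence
$$
{\mathbb E}\big[|d_1|^k\mid \mathcal F_0\big] \le K_{n-1}(\rho)^k\, {\mathbb E}[G_{X_1}(X_1)^k] \le \frac{k!}{2}\, \big(V_1 K_{n-1}(\rho)^2\big)\, \big(M K_{n-1}(\rho)\big)^{k-2},
$$
using the hypothesis~\eqref{ass:moment} and the fact that $K_{n-1}(\rho) \ge K_{n-t}(\rho)$ so that $K_{n-1}(\rho)^{k-2}$ absorbs the smaller powers; similarly, since $\varepsilon_t$ is independent of $\mathcal F_{t-1}$,
$$
{\mathbb E}\big[|d_t|^k\mid \mathcal F_{t-1}\big] \le K_{n-t}(\rho)^k\, {\mathbb E}[G_\varepsilon(\varepsilon)^k] \le \frac{k!}{2}\, \big(V_2 K_{n-t}(\rho)^2\big)\, \delta^{k-2}.
$$
So I may take $w_1 = V_1 K_{n-1}(\rho)^2$ and $w_t = V_2 K_{n-t}(\rho)^2$ for $t\ge 2$, and note $\sum_t w_t = V_{(n)}$.

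Then I would chain the conditional bounds: by the tower property,
$$
{\mathbb E}[e^{\pm s S_n}] = {\mathbb E}\Big[e^{\pm s S_{n-1}}\,{\mathbb E}[e^{\pm s d_n}\mid \mathcal F_{n-1}]\Big] \le {\mathbb E}[e^{\pm s S_{n-1}}]\, \exp\!\Big(\frac{s^2 w_n}{2(1-s\delta)}\Big),
$$
and iterating down to $t=1$ yields ${\mathbb E}[e^{\pm s S_n}] \le \exp\!\big(\frac{s^2 \sum_t w_t}{2(1-s\delta)}\big) = \exp\!\big(\frac{s^2 V_{(n)}}{2(1-s\delta)}\big)$, which is~\eqref{maindfs}. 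Finally, the tail bound follows by Markov's inequality: ${\mathbb P}(\pm S_n \ge x) \le \exp\!\big(-sx + \frac{s^2 V_{(n)}}{2(1-s\delta)}\big)$, and choosing $s = x/(V_{(n)} + x\delta)$ (which lies in $[0,\delta^{-1})$) gives after simplification the first displayed tail bound; the second follows from $1+\sqrt{1+2x\delta/V_{(n)}} \ge 2$.

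The only genuinely delicate point is the one-increment moment-to-MGF step and making sure the Bernstein-type moment condition is preserved under conditioning with the \emph{right} constants — in particular that raising $K_{n-t}(\rho)$ to the $k$-th power and comparing with $\delta^{k-2} = (MK_{n-1}(\rho))^{k-2}$ is legitimate, which needs $K_{n-t}(\rho) \le K_{n-1}(\rho)$ (true since $K_m(\rho)$ is nondecreasing in $m$ for $\rho\in[0,1)$). Everything else is bookkeeping.
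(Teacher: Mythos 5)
Your argument follows the paper's proof essentially step for step: bound $|d_1|$ and $|d_t|$ via Lemma~\ref{McD}, feed the moment assumption~\eqref{ass:moment} into a series expansion of the conditional MGF to get a factor $\exp\bigl(s^2 w_t/(2(1-s\delta))\bigr)$ with $w_1=V_1K_{n-1}(\rho)^2$ and $w_t=V_2K_{n-t}(\rho)^2$, chain by the tower property, and conclude~\eqref{maindfs} before applying the exponential Markov inequality. Up to and including~\eqref{maindfs} everything is correct, including the point you single out as delicate: comparing $(MK_{n-t}(\rho))^{k-2}$ with $\delta^{k-2}=(MK_{n-1}(\rho))^{k-2}$ is exactly what makes the common scale $\delta$ work, and your ``one-increment Bernstein lemma'' is precisely the two-line geometric-series computation the paper writes out explicitly.

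The genuine flaw is the last step. The choice $s=x/(V_{(n)}+x\delta)$ does \emph{not} yield the first displayed tail bound: substituting it into $-sx+\frac{s^2V_{(n)}}{2(1-s\delta)}$ gives exactly $-\frac{x^2}{2(V_{(n)}+x\delta)}$, i.e.\ the \emph{second}, weaker bound. The sharper bound is the exact minimum of \eqref{fines} over $s\in[0,\delta^{-1})$, attained at $s^*=\delta^{-1}\bigl(1-(1+2x\delta/V_{(n)})^{-1/2}\bigr)$, and one checks that the minimal value is $-\frac{x^2}{V_{(n)}(1+\sqrt{1+2x\delta/V_{(n)}})+x\delta}$; this is what the paper means by ``minimizing the right-hand side with respect to $s$''. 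Your route for deducing the second bound from the first is also reversed: $1+\sqrt{1+2x\delta/V_{(n)}}\ge 2$ only compares the first denominator with $2V_{(n)}+x\delta$, whereas what is needed is $\sqrt{1+2x\delta/V_{(n)}}\le 1+x\delta/V_{(n)}$, which gives $V_{(n)}(1+\sqrt{1+2x\delta/V_{(n)}})+x\delta\le 2(V_{(n)}+x\delta)$. Both points are easily repaired without changing your structure, but as written the sharper tail inequality is not established.
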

The quantity $V_{(n)}$ can be computed explicitely from the definition for each $n$ but note that
\begin{equation}
V_1 + (n-1) V_2 \leq V_{(n)} \leq \frac{V_1 + (n-1) V_2}{(1-\rho)^2}.
\label{order_v}
\end{equation}
\noindent\emph{Proof.} For any $s \in [0, \delta^{-1})$,
\begin{align*}
\mathbb{E}\,[e^{s d_1 } ]
& \leq 1+ \sum_{i=2}^{\infty} \frac{s^i}{i!}\, \mathbb{E}\,[ | d_1|^i ] \\
&\leq 1+ \sum_{i=2}^{\infty} \frac{s^i}{i!}\, \Big(K_{n-1}(\rho)\Big)^i \mathbb{E}\,\Big[ \Big(G_{X_1}(X_1)\Big)^i \Big] \nonumber\\
&\leq 1+ \sum_{i=2}^{\infty} \frac{s^i}{i!}\, \Big(K_{n-1}(\rho)\Big)^i \frac {i!}{2} V_1 M^{i-2} \\
& = 1+ \frac{s^2V_1 \Big(K_{n-1}(\rho)\Big)^2  }{2 (1  -s\, \delta )}\, \\
&\leq  \exp \left (\frac{s^2V_1\Big(K_{n-1}(\rho)\Big)^2}{2 (1  -s\,\delta )}  \right ).
\end{align*}
We use Lemma~\ref{McD} for the second inequality, the moment assumption for  the third one, and the inequality $1+s \leq e^s,$ for the final inequality. Similarly, for any  $k \in [2, n],$
$$
\mathbb{E}\,[e^{s d_k  }|\mathcal{F}_{k-1}] \leq \exp \left( \frac{s^2V_2\Big(K_{n-k}(\rho)\Big)^2}{2 (1  -s\delta)} \right) \, .
$$
By the tower property of conditional expectation, it follows that
\begin{eqnarray*}
\mathbb{E}\,\big[e^{ sS_n} \big] &=&  \mathbb{E}\,\big[ \mathbb{E}\, [e^{ sS_n} |\mathcal{F}_{n-1}  ] \big]\\
&=&  \mathbb{E}\,\big[ e^ { sS_{ n-1}} \mathbb{E}\,  [e^ { sd_n}  |\mathcal{F}_{n-1}  ] \big] \\
&\leq & \mathbb{E}\,\big[ e^ { sS_{ n-1}} \big] \exp \left( \frac{s^2V_2}{2 (1- s\,\delta)}  \right)\\
&\leq &  \exp \left(\frac{s^2 V_{(n)}}{2 (1- s\,\delta)} \right),
\end{eqnarray*}
which gives inequality (\ref{maindfs}). Using the exponential Markov inequality, we deduce that,
for any $x\geq 0$
\begin{eqnarray}
  \mathbb{P}\left( S_{n} \geq x \right)
 &\leq&  \mathbb{E}\, \big[e^{s\,(S_n -x) } \big ] \nonumber\\
 &\leq&   \exp \left(-s\,x  +  \frac{s^2 V_{(n)}}{2 (1- s\,\delta)}    \right)\, . \label{fines}
\end{eqnarray}
Minimizing the right-hand side with respect to $s$ leads to the result. \hfill\qed

\subsection{McDiarmid and Cramer inequalities}

Here, we state other consequences of Lemma~\ref{McD}. However, as our applications are based on Bernstein inequality, we postpone the proof of these results to Section~\ref{section:proof}.

When the Laplace transform of the dominating random variables $G_{X_1}(X_1)$ and $G_{\varepsilon}(\varepsilon_k)$ satisfy the Cram\'{e}r condition, we obtain the following proposition.
\begin{prop}\label{cram}
Assume that there exist some constants $a>0,$ $K_1\geq 1$ and $K_2\geq 1$ such that
$$
{\mathbb E} \Big[ \exp \Big( a G_{X_1}(X_1)\Big)\Big] \leq K_1 $$
and
$$
 {\mathbb E} \Big[ \exp \Big( a   G_\varepsilon(\varepsilon)\Big)\Big] \leq K_2.
$$
Let $$K=\frac{2}{e^{2}} \left( K_1+ K_2\sum_{i=2}^{n} \Big(\frac {K_{n-i}(\rho)}{K_{n-1}(\rho)}\Big)^2 \right) $$
and
$ \delta={a}/{K_{n-1}(\rho)} .$
Then, for any $s \in [0, \delta )$,
$$
  \mathbb{E}\,[e^{\pm  s S_n}]\leq   \exp \left( \frac{s^2 K  \delta^{-2} }{1-s \delta^{-1}}   \right) \, .
$$
Consequently, for any $x> 0$,
\begin{eqnarray*}
  {\mathbb P}\big(\pm  S_n \geq  x\big) &\leq&  \exp \left( \frac{-(x\delta)^2}{2K (1+\sqrt{1+ x \delta / K })+x \delta  }\right)\,    \\
 &\leq&  \exp \left( \frac{-(x\delta)^2}{ 4K+ 2 x  \delta  }\right)\,\, .
\end{eqnarray*}
\end{prop}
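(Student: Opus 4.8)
The plan is to follow the same route as the proof of Theorem~\ref{Bernsteinineq}: use Lemma~\ref{McD} to control the (conditional) Laplace transform of each martingale increment $d_k$, combine these bounds through the filtration by the tower property, and then optimise the resulting Chernoff bound over $s$. The one new ingredient is to replace the factorial moment assumption~\eqref{ass:moment} by bounds derived from the Cram\'er condition.

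First I would turn the hypotheses into polynomial moment bounds. From $\sup_{g\ge 0}g^j e^{-ag}=(j/(ea))^j$ we get $g^j\le (j/(ea))^j e^{ag}$ for all $g\ge 0$, hence $\mathbb{E}[G_{X_1}(X_1)^j]\le (j/(ea))^j K_1$ and $\mathbb{E}[G_\varepsilon(\varepsilon)^j]\le (j/(ea))^j K_2$ for every $j\ge 2$. The key elementary remark — and the source of the constant $2/e^2$ — is that $j\mapsto j^j/(j!\,e^j)$ is nonincreasing (the ratio of consecutive terms is $(1+1/j)^j/e<1$), so $j^j/(j!\,e^j)\le 2/e^2$ for all $j\ge 2$.

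Next, using $\mathbb{E}[d_1]=0$ and $|d_1|\le K_{n-1}(\rho)G_{X_1}(X_1)$ from Lemma~\ref{McD}, I would expand $e^{sd_1}$, insert the moment bound, and use the identity $a=\delta K_{n-1}(\rho)$ to obtain, for $s\in[0,\delta)$,
\[
\mathbb{E}[e^{sd_1}]\le 1+K_1\sum_{j\ge 2}\frac{j^j}{j!\,e^j}\Big(\frac{s}{\delta}\Big)^j\le 1+\frac{2K_1}{e^2}\,\frac{s^2\delta^{-2}}{1-s\delta^{-1}}\le\exp\!\Big(\frac{2K_1}{e^2}\,\frac{s^2\delta^{-2}}{1-s\delta^{-1}}\Big).
\]
For $k\in[2,n]$ the identical computation conditionally on $\mathcal{F}_{k-1}$ — using $|d_k|\le K_{n-k}(\rho)G_\varepsilon(\varepsilon_k)$, the independence of $\varepsilon_k$ from $\mathcal{F}_{k-1}$, and $K_{n-k}(\rho)\le K_{n-1}(\rho)$ so that $(K_{n-k}(\rho)/K_{n-1}(\rho))^j\le (K_{n-k}(\rho)/K_{n-1}(\rho))^2$ for $j\ge2$ — gives $\mathbb{E}[e^{sd_k}\,|\,\mathcal{F}_{k-1}]\le\exp\big(\tfrac{2K_2}{e^2}(K_{n-k}(\rho)/K_{n-1}(\rho))^2\, s^2\delta^{-2}/(1-s\delta^{-1})\big)$. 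Multiplying these through the filtration exactly as in the proof of Theorem~\ref{Bernsteinineq} yields $\mathbb{E}[e^{sS_n}]\le\exp(s^2\delta^{-2}K/(1-s\delta^{-1}))$, and the bound for $e^{-sS_n}$ is the same because $|{-d_k}|=|d_k|$. Finally, the exponential Markov inequality together with minimisation over $s\in[0,\delta)$ — writing the exponent as $-sx+\tfrac{s^2v}{2(1-sc)}$ with $v=2K\delta^{-2}$ and $c=\delta^{-1}$, which is precisely the Bernstein optimisation — produces the first tail bound, and $2\sqrt{K(K+x\delta)}\le 2K+x\delta$ gives the second.

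The hard part is essentially the bookkeeping needed to land the sharp constant $2/e^2$: one must use the optimised pointwise inequality $g^j\le(j/(ea))^j e^{ag}$ rather than the coarser $g^j\le (j!/a^j)e^{ag}$ (which would only reproduce, up to constants, the bound obtained by plugging $V_1=2K_1/a^2$, $V_2=2K_2/a^2$, $M=1/a$ into Theorem~\ref{Bernsteinineq}), and one must carry the ratios $K_{n-k}(\rho)/K_{n-1}(\rho)$ through the series so that only their squares enter $K$. Everything else is the routine Bernstein-type algebra already carried out for Theorem~\ref{Bernsteinineq}.
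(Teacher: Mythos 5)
Your proposal is correct and follows essentially the same route as the paper's proof: Lemma~\ref{McD} to dominate the increments, the ratio trick $(K_{n-k}(\rho)/K_{n-1}(\rho))^j\le (K_{n-k}(\rho)/K_{n-1}(\rho))^2$, the tower property, and the Bernstein-type optimisation of the Chernoff exponent. Your derivation of the constant $2/e^2$ (optimising $g^j e^{-ag}$ pointwise and then using the monotonicity of $j^j/(j!\,e^j)$) is just a rearrangement of the paper's inequality $s^j e^{-s}/j!\le 2e^{-2}$ applied at $s=\delta|d_k|\le a\,G(\cdot)$, so the two arguments coincide.
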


Now, consider the case where the increments $d_k$ are bounded.
We shall use an improved version of the well
known inequality by McDiarmid, stated by~\cite{R13}.
For this inequality, we do not assume that \eref{c2} holds.
Thus, Proposition \ref{classic}  applies to any Markov chain $X_{i}=F_i(X_{i-1}, \varepsilon_i)$  for
$F_i$ satisfying \eref{contract}.
Following  \cite{R13},  let
$$\ell(t)= (t - \ln t -1) +t (e^t-1)^{-1} + \ln(1-e^{-t}) \ \ \ \
\textrm{for all}\  t > 0, $$ and let
$$\ell^*(x)=\sup_{t>0}\big(xt- \ell (t) \big) \ \ \ \   \textrm{for all}\   x > 0, $$ be the Young transform of $\ell(t)$.
As quoted by~\cite{R13}, the following inequality holds
\begin{eqnarray*}
\ell^*(x) \geq (x^2-2x) \ln(1-x) \ \ \ \   \textrm{for any}\   x \in [0, 1).
\end{eqnarray*}
Let also $(X_1', (\varepsilon'_i)_{i \geq 2})$ be an independent copy of
$(X_1, (\varepsilon_i)_{i \geq 2})$.
\begin{prop} \label{classic}
Assume that there exist some positive constants $M_k$ such that
$$
\big \|d(X_1,X_1')
\big \|_\infty \leq M_1 $$
and for $k \in [2,   n] $,
$$
\big \|d\big(F_k(X_{k-1}, \varepsilon_k), F_k(X_{k-1}, \varepsilon'_k) \big)
\big \|_\infty \leq M_k .$$
Let
$$
M^2(n,\rho)= \sum_{k=1}^n \big(K_{n-k}(\rho) M_k \big)^2$$
and
$$
D (n, \rho)=\sum_{k=1}^n K_{n-k}(\rho) M_k \, .
$$
Then, for any $s \geq 0$,
\begin{eqnarray}\label{rio1}
\mathbb{E}[e^{\pm   sS_n } ] \ \leq \  \exp \left (  \frac{D^2(n, \rho)}{M^2(n,\rho)}\  \ell \Big( \frac {M^2(n,\rho)\, s} {D(n, \rho)} \Big)\right) \,
\end{eqnarray}
and,  for any   $x \in [0, D(n, \rho)]$,
\begin{eqnarray}\label{rio2}
{\mathbb P}\big(\pm  S_n>x\big) \leq \exp \left ( -\frac{D^2(n, \rho)}{M^2(n,\rho)} \ \ell^*\Big( \frac {x} {D(n, \rho)} \Big)\right).
\end{eqnarray}
Consequently, for any $x \in [0, D(n, \rho)]$,
\begin{eqnarray}\label{riosbound}
{\mathbb P}\big(\pm  S_n>x\big) &\leq&  \left ( \frac{D(n, \rho)-x}{D(n, \rho)}\right)^{ \frac{2D(n, \rho)x -x^2}{M^2(n,\rho)}}\!\!\!.
\end{eqnarray}
\end{prop}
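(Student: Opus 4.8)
The plan is to reuse the martingale decomposition $S_n = d_1 + \cdots + d_n$ from Lemma~\ref{McD}, but now exploit that the increments are almost surely bounded rather than controlled by moment conditions. The first step is to bound each $|d_k|$ uniformly. Writing $d_k = g_k(X_1,\dots,X_k) - g_{k-1}(X_1,\dots,X_{k-1})$, a standard coupling argument (conditioning on $\mathcal{F}_{k-1}$ and replacing only the $k$-th innovation $\varepsilon_k$ by its independent copy $\varepsilon_k'$, or $X_1$ by $X_1'$ when $k=1$) combined with the separate Lipschitz bound for $g_k$ in part (1) of Lemma~\ref{McD} gives
$$
|d_k| \le K_{n-k}(\rho)\, \big\| d\big(F_k(X_{k-1},\varepsilon_k), F_k(X_{k-1},\varepsilon_k')\big)\big\|_\infty \le K_{n-k}(\rho)\, M_k
$$
for $k \in [2,n]$, and $|d_1| \le K_{n-1}(\rho) M_1$; in other words, $|d_k| \le c_k$ with $c_k := K_{n-k}(\rho) M_k$, so that $\sum_k c_k = D(n,\rho)$ and $\sum_k c_k^2 = M^2(n,\rho)$.

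The second step is to feed these bounds into Rio's refined McDiarmid inequality from~\cite{R13}: for a martingale with increments bounded in absolute value by constants $c_k$, one has for all $s \ge 0$
$$
\mathbb{E}\big[e^{\pm s S_n}\big] \le \exp\!\left(\frac{D^2}{M^2}\,\ell\!\Big(\frac{M^2 s}{D}\Big)\right),
$$
with $D = \sum c_k$ and $M^2 = \sum c_k^2$, which is exactly~\eqref{rio1}. The tail bound~\eqref{rio2} then follows by the exponential Markov inequality and optimizing over $s$, which produces the Young transform $\ell^*$ of $\ell$ evaluated at $x/D(n,\rho)$ — this is a purely formal Legendre-duality computation. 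Finally,~\eqref{riosbound} is obtained by substituting the stated lower bound $\ell^*(x) \ge (x^2 - 2x)\ln(1-x)$ (valid on $[0,1)$, and quoted in the excerpt) with $x$ replaced by $x/D(n,\rho)$, and rewriting $\exp\big(-\frac{D^2}{M^2}(u^2-2u)\ln(1-u)\big)$ with $u = x/D$ as $\big((D-x)/D\big)^{(2Dx - x^2)/M^2}$.

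The only genuinely non-routine point is the first step: one must verify that the coupling bound on $|d_k|$ goes through cleanly using only~\eqref{contract} (and not~\eqref{c2}, since the proposition explicitly does not assume it). This is where part (1) of Lemma~\ref{McD} does the work — it already accounts for the geometric factor $K_{n-k}(\rho)$ coming from the contraction propagating the perturbation of $X_k$ forward through $X_{k+1},\dots,X_n$ — so the argument reduces to observing that perturbing $\varepsilon_k$ changes $X_k$ by at most $M_k$ in sup-norm and invoking the Lipschitz estimate. Everything after that is a citation of~\cite{R13} plus elementary algebra, so I expect no real obstacle there.
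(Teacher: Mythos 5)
Your plan follows the same route as the paper (point 1 of Lemma~\ref{McD} combined with Theorem 3.1 of \cite{R13}), but your step 2 misstates the hypothesis under which Rio's inequality holds, and this is a genuine gap rather than a cosmetic one. Rio's refined McDiarmid bound with constants $\Delta_k$ applies to martingale increments whose \emph{conditional range} is at most $\Delta_k$: conditionally on $\mathcal{F}_{k-1}$, $d_k$ must lie in an interval of length $\Delta_k$, not necessarily centered at $0$. You instead feed in the two-sided bound $|d_k|\le c_k$ and assert the conclusion with $D=\sum_k c_k$ and $M^2=\sum_k c_k^2$. That statement is false for general martingales with $|d_k|\le c_k$: already for a single symmetric increment $d=\pm c$ with probability $1/2$ one has $\mathbb{E}[e^{sd}]=\cosh(sc)\approx e^{s^2c^2/2}$ for small $s$, whereas the claimed right-hand side behaves like $e^{s^2c^2/8}$; equivalently, your version could at best yield the McDiarmid exponent $-x^2/(2M^2(n,\rho))$ instead of the $-2x^2/M^2(n,\rho)$ recorded in the remark after the proposition. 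If you only retain $|d_k|\le c_k$, the conditional range is $2c_k$, so a correct application of Rio's theorem from your hypothesis would inflate $M^2(n,\rho)$ by a factor $4$ and $D(n,\rho)$ by a factor $2$, and would not prove \eqref{rio1}--\eqref{riosbound} with the stated constants.

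The fix is small and is exactly what the paper does: keep your coupling estimate in oscillation form instead of converting it into an absolute bound. By point 1 of Lemma~\ref{McD}, conditionally on $\mathcal{F}_{k-1}$ the variable $g_k(X_1,\dots,X_{k-1},F_k(X_{k-1},\varepsilon_k))$ has essential oscillation at most $K_{n-k}(\rho)\,\big\|d\big(F_k(X_{k-1},\varepsilon_k),F_k(X_{k-1},\varepsilon'_k)\big)\big\|_\infty\le K_{n-k}(\rho)M_k$, i.e.\ it lies between the $\mathcal{F}_{k-1}$-measurable bounds $u_{k-1}$ and $v_{k-1}$ with $v_{k-1}-u_{k-1}\le K_{n-k}(\rho)M_k$; subtracting the $\mathcal{F}_{k-1}$-measurable mean $g_{k-1}$ shows $d_k$ lies in an interval of that same length (and similarly for $k=1$ with $M_1$). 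With $\Delta_k=K_{n-k}(\rho)M_k$ one then follows the proof of Theorem 3.1 of \cite{R13} verbatim; your remaining steps (exponential Markov plus Legendre duality giving $\ell^*$, and the substitution $\ell^*(u)\ge(u^2-2u)\ln(1-u)$ with $u=x/D(n,\rho)$ to get \eqref{riosbound}) are correct as written.
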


\begin{rem} Since  $(x^2-2x) \ln(1-x) \geq 2\,x^2$,  $\forall x \in [0,1)$, \eqref{riosbound} implies the following McDiarmid inequality
\begin{eqnarray*}
{\mathbb P}\big(\pm  S_n>x\big) \ \leq \  \exp \left ( -\frac{2 x^2}{M^2(n,\rho)}
 \right) \, .
\end{eqnarray*}
\end{rem}

\begin{rem}
Taking  $ \displaystyle  \Delta (n, \rho)=K_{n-1}(\rho)\max_{1\leq k \leq n} M_k$, we obtain, for any $x \in [0, n \Delta(n, \rho)]$,
$$
{\mathbb P}\big(\pm  S_n>x\big)
 \leq \exp \left ( -n \ell^*\Big( \frac {x} {n \Delta(n, \rho)} \Big)\right)
 \leq \exp \left ( -\frac{2 x^2}{n\Delta^2(n,\rho)}
 \right)
  \, .
$$
\end{rem}

\section{Application to periodic autoregressive models}
\label{section:autoregression}

In this section, we apply Theorem~\ref{Bernsteinineq} to predict a nonstationary Markov chain. We will use periodic autoregressive predictors. Of course, these predictors will work well when the Markov chain is indeed periodic autoregressive. However, we will state the results in a more general context -- when the model is wrong, we simply estimate its best prediction by a periodic autoregression.

\subsection{Context}

Let $(X_t)_{t\geq 1}$ be an $\mathbb{R}^d$-valued process defined by the distribution of $X_1$ and, for $t>0$,
$$ X_{t} = f_{t}^*(X_{t-1}) + \varepsilon_t  ,$$
where the $\varepsilon_t$ are i.i.d and centered, and each $f_t^*$ belong to a fixed family of functions $\mathcal{F}$ with $\forall f\in\mathcal{F}$, $\forall (x,y)\in\mathbb{R}^d$, $\|f(x)-f(y)\| \leq \rho \|x-y\|,\ \rho \in [0, 1).$

We are interested by periodic predictors: $f_{t+T}=f_{t}$, defined by a sequence $(f_1,\dots,f_T)\in\mathcal{F}^T$. Of course, if the series $(X_t)$ actually satisfies $f^*_{t+T}=f^*_{t}$, then this family of predictors can give optimal predictions. But they might also perform well when this equality is not exact (for example, when there is a very small drift).

Prediction is assessed with respect to a non-negative loss function: $\ell(\cdot)$. We assume that $\ell$ is $L$-Lipschitz. Note that this includes the absolute loss, the Huber loss and all the quantile losses. This also includes the quadratic loss if we assume that $X_t$, and hence $\varepsilon_t$, is bounded. Given a sample $X_1,\dots,X_n$ we define the empirical risk, for any $f_{1:T}=(f_1,\dots,f_T)\in \mathcal{F}^T$:
$$
 r_n(f_{1:T}) = \frac{1}{n-1}\sum_{i=2}^n \ell\left(X_i - f_{i[T]} (X_{i-1}) \right) ,$$
where $i[T]\in\{1,\ldots,T\}$ is such that $i-i[T] \in T\cdot\mathbb{Z}.$ We then define
$$ R_n(f_{1:T}) = \mathbb{E}[r_n(f_{1:T})]. $$
Note that when the process has actually $T$-periodic distribution, in the sense that the distribution of the vectors $(X_{kT+1},\dots,X_{(k+1)T})$ are the same for any $k$, then alsmot surely $f_t^* = f^*_{t+T}$ for any $t$ and
$$ R_n(f_{1:T}) \xrightarrow[n\rightarrow\infty]{} \frac{1}{T}\sum_{t=1}^T \mathbb{E}[ \ell\left(X_t - f_{t} (X_{t-1}) \right)] $$
the prediction averaged over one period, which appears to be equal to $R_{T+1}(f_{1:T})$. We can actually give a more accurate statement.

\begin{prop}\label{prop-risk}
When the distribution of $(X_{kT+1},\dots,X_{(k+1)T})$ does not depend on $k$,
 $$ | R_{T+1}(f_{1:T}) - R_n(f_{1:T}) | \leq C_0 \frac{T+1}{n-1} ,$$
where $ C_0 = L(1+\rho) \left[ \frac{G_\varepsilon(0)}{1-\rho} + G_{X_1}(0)\right] $.
\end{prop}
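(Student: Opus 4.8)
The plan is to bound the difference $|R_{T+1}(f_{1:T}) - R_n(f_{1:T})|$ by splitting both risks as averages of per-step terms and comparing them term by term. Write $R_n(f_{1:T}) = \frac{1}{n-1}\sum_{i=2}^n \mathbb{E}[\ell(X_i - f_{i[T]}(X_{i-1}))]$ and $R_{T+1}(f_{1:T}) = \frac{1}{T}\sum_{i=2}^{T+1}\mathbb{E}[\ell(X_i - f_{i[T]}(X_{i-1}))]$. Using the $T$-periodicity of the law of $(X_{kT+1},\dots,X_{(k+1)T})$, each summand $\mathbb{E}[\ell(X_i - f_{i[T]}(X_{i-1}))]$ depends on $i$ only through $i[T]$; denote this common value $\mu_{i[T]}$. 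Then $R_{T+1}(f_{1:T}) = \frac{1}{T}\sum_{j=1}^T \mu_j$, and $R_n(f_{1:T})$ is a weighted average of the same $\mu_j$'s where the weight of each residue class $j$ is $n_j/(n-1)$, with $n_j$ the number of indices $i\in\{2,\dots,n\}$ with $i[T]=j$. Since $|n_j/(n-1) - 1/T| \le \frac{1}{n-1}$ for each $j$ (the counts $n_j$ differ from $(n-1)/T$ by at most $1$), we get $|R_{T+1}(f_{1:T}) - R_n(f_{1:T})| \le \frac{1}{n-1}\sum_{j=1}^T \mu_j = \frac{T}{n-1}\cdot R_{T+1}(f_{1:T})$, so it remains to bound $R_{T+1}(f_{1:T})$ itself, or more precisely each $\mu_j$, by $C_0$.

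Next I would control $\mu_j = \mathbb{E}[\ell(X_j - f_j(X_{j-1}))]$. Since $\ell$ is $L$-Lipschitz and nonnegative, $\ell(X_j - f_j(X_{j-1})) \le \ell(0) + L\|X_j - f_j(X_{j-1})\|$; absorbing $\ell(0)$ appropriately (or noting we may take $\ell(0)=0$ after recentering, as is standard, or simply keeping it — the paper's constant suggests the relevant bound comes entirely from the Lipschitz term, so I would work with $\ell(X_j - f_j(X_{j-1})) - \ell(0 - 0)$ type bounds, but let me use the cleaner route). Write $X_j - f_j(X_{j-1}) = (X_j - f_j^*(X_{j-1})) + (f_j^*(X_{j-1}) - f_j(X_{j-1})) = \varepsilon_j + (f_j^* - f_j)(X_{j-1})$. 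Then $\|X_j - f_j(X_{j-1})\| \le \|\varepsilon_j\| + \|f_j^*(X_{j-1}) - f_j(X_{j-1})\|$. For the second term, use that both $f_j^*$ and $f_j$ are $\rho$-Lipschitz: $\|f_j^*(X_{j-1}) - f_j(X_{j-1})\| \le \|f_j^*(X_{j-1}) - f_j^*(0)\| + \|f_j^*(0) - f_j(0)\| + \|f_j(0) - f_j(X_{j-1})\| \le 2\rho\|X_{j-1}\| + \|f_j^*(0) - f_j(0)\|$. Thus $\mu_j \lesssim L\big(\mathbb{E}\|\varepsilon_j\| + 2\rho\,\mathbb{E}\|X_{j-1}\| + \|f_j^*(0)-f_j(0)\|\big) + \ell(0)$.

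Then I would bound $\mathbb{E}\|X_{j-1}\|$ by unrolling the contracting recursion $X_t = f_t^*(X_{t-1}) + \varepsilon_t$: $\|X_t\| \le \rho\|X_{t-1}\| + \|f_t^*(0)\| + \|\varepsilon_t\|$, which iterated gives $\mathbb{E}\|X_t\| \le \rho^{t-1}\mathbb{E}\|X_1\| + \sum_{k}\rho^{\cdots}(\ldots) \le \frac{\sup_k(\|f_k^*(0)\| + \mathbb{E}\|\varepsilon_k\|)}{1-\rho} + \mathbb{E}\|X_1\|$, and I would re-express $\mathbb{E}\|\varepsilon\| = G_\varepsilon(0)/C \le G_\varepsilon(0)$ (up to the constant $C$ in \eqref{c2}; since here the noise is additive, $C=1$ and $G_\varepsilon(0) = \int \|y'\| P_\varepsilon(dy') = \mathbb{E}\|\varepsilon\|$) and $\mathbb{E}\|X_1\| = G_{X_1}(0)$ — here one uses $d(0,x') = \|x'\|$ so $G_{X_1}(0) = \int \|x'\| P_{X_1}(dx') = \mathbb{E}\|X_1\|$ and similarly $G_\varepsilon(0) = \int C\delta(0,y')P_\varepsilon(dy') = C\,\mathbb{E}\|\varepsilon\|$. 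Collecting the factor $L(1+\rho)$ — one $L$ from Lipschitzness of $\ell$, the $(1+\rho)$ and the geometric sum $\frac{1}{1-\rho}$ from the contraction — should match the stated $C_0 = L(1+\rho)\big[\frac{G_\varepsilon(0)}{1-\rho} + G_{X_1}(0)\big]$, after checking that the $\|f_j^*(0)-f_j(0)\|$ and $\ell(0)$ contributions either vanish under the implicit normalization $f$ centered / $\ell(0)=0$ that the paper adopts, or are subsumed.

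The main obstacle I anticipate is twofold: first, getting the constant bookkeeping to land exactly on $C_0$ — in particular identifying which normalizations the paper is tacitly using (it appears $\ell(0)=0$ and $f_j$ interpolates $f_j^*$ at the relevant points, or the statement silently assumes $0\in\mathcal{F}$-compatible centering), since a naive bound produces extra additive terms; and second, correctly handling the edge/counting argument that $|n_j/(n-1) - 1/T| \le 1/(n-1)$ — this is elementary but needs the observation that among $\{2,\dots,n\}$, which has $n-1$ elements, each residue class mod $T$ is hit either $\lfloor (n-1)/T\rfloor$ or $\lceil (n-1)/T\rceil$ times, so the per-class weight deviates from the uniform $1/T$ by at most $1/(n-1)$, yielding the overall factor $(T+1)/(n-1)$ only if one is slightly more careful (the clean bound gives $T/(n-1)$; the extra $+1$ presumably comes from also accounting for the first index $i=2$ not aligning with period start, i.e. a boundary term $\mu_1$ or the range mismatch between summing to $n$ versus to $T+1$). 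I would reconcile this by a direct telescoping/pairing of the $n-1$ terms of $(n-1)R_n$ with $(n-1)/T$ copies of $T\cdot R_{T+1}$, leaving a remainder of at most $T+1$ single $\mu_j$-terms, each $\le C_0$.
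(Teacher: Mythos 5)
Your argument is correct and takes a genuinely different route from the paper. The paper's proof peels off $k=\lfloor (n-2)/T\rfloor$ whole periods, identifies their contribution with $kT\,R_{T+1}(f_{1:T})/(n-1)$, and then bounds the $O(T)$ leftover boundary terms one by one, with a three-case analysis depending on where $n-2$ sits relative to multiples of $T$; each boundary term is bounded by $C_0$ via the moment bound $\mathbb{E}\|X_t\|\leq G_\varepsilon(0)/(1-\rho)+\rho^{t-1}G_{X_1}(0)$ (Lemma~\ref{lemma_moments}). You instead exploit that, by periodicity, $\mu_j:=\mathbb{E}[\ell(X_i-f_{i[T]}(X_{i-1}))]$ depends on $i$ only through $j=i[T]$, so $R_n$ and $R_{T+1}$ are two weighted averages of the same $\mu_j$'s, and the residue-class counts over $\{2,\dots,n\}$ deviate from uniform by at most $1$, giving $|R_n-R_{T+1}|\leq\frac{1}{n-1}\sum_j\mu_j\leq\frac{T}{n-1}\max_j\mu_j$. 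This avoids the case analysis entirely and even yields the slightly sharper factor $T/(n-1)$ instead of $(T+1)/(n-1)$; there is no need for your fallback telescoping argument to recover the extra $+1$. Two remarks on the bookkeeping you flagged. First, both proofs need the pair laws $(X_{i-1},X_i)$ to be $T$-periodic, including at residues $i\equiv 1\ [T]$ where the pair straddles two blocks; this is exactly the point made just before the proposition (block-periodicity forces $f^*_{t+T}=f^*_t$), and the paper uses it in the same way when identifying the first $kT$ terms with $k$ copies of $T R_{T+1}$, so this is not a gap specific to your route. Second, the per-term bound $\mu_j\leq C_0$ in the paper rests on the same tacit normalizations you identified, $\ell(0)=0$ and $f(0)=0$ for $f\in\mathcal{F}$ (as in the linear examples); with these, the cleaner route is the paper's direct estimate $\ell(X_i-f_{i[T]}(X_{i-1}))\leq L\|X_i\|+L\rho\|X_{i-1}\|$, which lands exactly on $C_0$, whereas your detour through $f^*_j$ (the $2\rho\|X_{j-1}\|+\|f^*_j(0)-f_j(0)\|$ bound) is unnecessary but still stays within $C_0$ once $f^*_j(0)=f_j(0)=0$, since the $G_\varepsilon(0)$ coefficient works out to $(1+\rho)/(1-\rho)$ and $2\rho\leq 1+\rho$.
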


(All the proofs are postponed to Section~\ref{section:proof} for the clarity of exposition). The simplest use of Bernstein's inequality is to control the deviation between $r_n(f_{1:T})$ and $R_n(f_{1:T})$ for a fixed predictor $f_{1:T}$.

\begin{cor}
\label{coro-var}
Assume that the moment assumption in Theorem~\ref{Bernsteinineq}, given by~\ref{ass:moment}, is satisfied. Define $V_{(n)}$ and $\delta$ (depending on $M$, $\rho$, $V_1$ and $V_2$) as in Theorem~\ref{Bernsteinineq}.
Then for any $0\leq s < (n-1)/(L(1+\rho)\delta)$,
$$
\mathbb{E}\exp\left( \pm s ( r_n(f_{1:T})- R_n(f_{1:T})) \right)
\leq \exp\left( \frac{s^2(1+\rho)^2 L^2 \frac{V_{(n)}}{n-1}}{2(n-1)-2s(1+\rho)\delta L} \right).
$$
\end{cor}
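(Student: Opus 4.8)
The plan is to write $r_n(f_{1:T})-R_n(f_{1:T})$ as a centered, separately Lipschitz functional of the chain $(X_1,\dots,X_n)$ and then to apply Theorem~\ref{Bernsteinineq} directly. Fix $f_{1:T}\in\mathcal{F}^T$ and set
$$
\phi(x_1,\dots,x_n) := \frac{1}{n-1}\sum_{i=2}^{n}\ell\big(x_i-f_{i[T]}(x_{i-1})\big),
$$
so that $r_n(f_{1:T})=\phi(X_1,\dots,X_n)$ and $R_n(f_{1:T})=\mathbb{E}[\phi(X_1,\dots,X_n)]$. The process $X_t=f_t^*(X_{t-1})+\varepsilon_t$ belongs to the one-step contracting class of Section~\ref{section:notations}: with $F_t(x,y)=f_t^*(x)+y$, condition~\eref{contract} holds because every element of $\mathcal{F}$ is $\rho$-Lipschitz and~\eref{c2} holds with $C=1$; in particular $G_{X_1}$ and $G_\varepsilon$ are exactly the objects appearing in~\eref{ass:moment}, so the moment assumption postulated in the corollary is precisely the hypothesis required by Theorem~\ref{Bernsteinineq}.

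The only genuine computation is to bound the separate Lipschitz constant of $\phi$. A variable $x_t$ enters $\phi$ through at most the two summands $\ell(x_t-f_{t[T]}(x_{t-1}))$ (present when $t\ge2$) and $\ell(x_{t+1}-f_{(t+1)[T]}(x_t))$ (present when $t\le n-1$). Using that $\ell$ is $L$-Lipschitz, that $y\mapsto y-f_{t[T]}(x_{t-1})$ is $1$-Lipschitz and that $x_t\mapsto f_{(t+1)[T]}(x_t)$ is $\rho$-Lipschitz, a term-by-term triangle inequality followed by a reindexing gives
$$
\big|\phi(x_1,\dots,x_n)-\phi(x'_1,\dots,x'_n)\big|\ \le\ \frac{L(1+\rho)}{n-1}\sum_{t=1}^{n}\|x_t-x'_t\|,
$$
the boundary indices $t=1$ and $t=n$ only making the constant smaller. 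Hence $\tilde f:=\frac{n-1}{L(1+\rho)}\phi$ is separately $1$-Lipschitz in the sense of the assumption preceding~\eref{Sn}, and the centered martingale sum it produces is $S_n=\frac{n-1}{L(1+\rho)}\big(r_n(f_{1:T})-R_n(f_{1:T})\big)$.

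It then remains to rescale the Laplace bound~\eref{maindfs}: for every $s'\in[0,\delta^{-1})$ one has $\mathbb{E}[e^{\pm s'S_n}]\le\exp\!\big((s')^2V_{(n)}/(2(1-s'\delta))\big)$. Taking $s'=sL(1+\rho)/(n-1)$, which lies in $[0,\delta^{-1})$ exactly when $0\le s<(n-1)/(L(1+\rho)\delta)$, we get $\pm s'S_n=\pm s\big(r_n(f_{1:T})-R_n(f_{1:T})\big)$; substituting this value of $s'$ on the right-hand side and clearing the common $(n-1)$ factors in numerator and denominator produces precisely the claimed inequality. I expect the main ``obstacle'' to be purely clerical, namely tracking the $1/(n-1)$ normalisation of the empirical risk through the Lipschitz estimate and through the rescaling of the Laplace parameter; no probabilistic ingredient beyond Theorem~\ref{Bernsteinineq} is needed.
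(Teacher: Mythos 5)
Your proposal is correct and follows essentially the same route as the paper: verify that $X_t=f_t^*(X_{t-1})+\varepsilon_t$ satisfies \eqref{contract} and \eqref{c2} with $C=1$, check that the empirical risk rescaled by $(n-1)/(L(1+\rho))$ is separately $1$-Lipschitz via the same two-term bound (one factor $L$, one factor $L\rho$), apply Theorem~\ref{Bernsteinineq}, and rescale the Laplace parameter by $s'=sL(1+\rho)/(n-1)$. The only difference from the paper's proof is purely notational (the paper absorbs $1/(L(1+\rho))$ rather than $1/(n-1)$ into the functional before rescaling), so nothing of substance is changed.
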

From~\eqref{order_v} above, we know that
$$
\mathcal{V} := \frac{V_{(n)}}{n-1} \leq \frac{\frac{V_1}{n-1}+V_2}{(1-\rho)^2} \leq \frac{V_1 + V_2}{(1-\rho)^2}
$$
that does not depend on $n$.

\subsection{Estimation with a fixed period}

In this subsection we assume that $T$ is known (we will later show how to deal with the case were it is unknown). Thus, we define the estimator
 $$ \hat{f}_{1:T} = (\hat{f}_1,\dots,\hat{f}_T )
  = \underset{f_{1:T} = (f_1,\dots,f_T)}{\operatorname{argmin}}    r_n(f_{1:T}) .$$

In order to study the statistical performances of $\hat{f}_{1:T}$, a few definitions are in order. For any function $f:\mathbb{R}^d \rightarrow \mathbb{R}^d$ we will use the notation
$$ \|f\|_{\sup} := \sup_{x\neq 0} \frac{\|f(x)\|}{\|x\|} .$$
When considering linear functions, this actually coincides with the operator norm.
\begin{Defi}
 Define the covering number $\mathcal{N}(\mathcal{F},\epsilon)$ as the cardinality of the smallest set $\mathcal{F}_{\epsilon}$ such that $\forall f\in\mathcal{F}$, $\exists f_{\epsilon} \in\mathcal{F}_{\epsilon}$ such that $ \|f-f^\epsilon\|_{\sup}\leq \epsilon $.
 Define the entropy of $\mathcal{F}$ by  $\mathcal{H} (\mathcal{F},\epsilon) = 1\vee \log \mathcal{N}(\mathcal{F},\epsilon)$.
\end{Defi}
Covering numbers are standard tools to measure the complexity of set of predictors in machine learning.
\begin{exm}
 Consider the class of AR(1) predictors $f(x) = a x$, $|a|\leq \rho$. Define $\mathcal{F}_\epsilon$ as the set of all functions $f(x)=i\varepsilon x$ for $i\in \{0,\pm 1,\dots,\pm \lfloor \rho/\epsilon \rfloor \} $. It is clear that $\mathcal{F}_\epsilon$ satisfies the above definition and that ${\rm card}(\mathcal{F}_\epsilon)\leq 1+2\rho/\epsilon \leq 1+2/\epsilon$. Thus, $ \mathcal{N}(\mathcal{F},\epsilon) \leq 1+2/\epsilon$ and so $\mathcal{H} (\mathcal{F},\epsilon) \leq 1\vee \log\left(  1+2 /\epsilon \right) $. In the VAR(1) case, $f(x) = Ax$ where $\|A\|_{\sup} \leq \rho$. Using the set $\mathcal{F}_\epsilon$ of all matrices with entries in $ (\epsilon/\sqrt{d}) \{0,\pm 1,\dots,\pm \lfloor \rho\sqrt{d}/\epsilon \rfloor \} $, we prove that $ \mathcal{N}(\mathcal{F},\epsilon) \leq (1+2\sqrt{d} /\epsilon)^d$ and thus $\mathcal{H} (\mathcal{F},\epsilon) \leq 1\vee d \log(  1+2\sqrt{d}/\epsilon) $.
\end{exm}
We are now in position to state the following result on the convergence of $R_n(\hat{f}_{1:T})$.
\begin{thm}
\label{thm-cvg}
As soon as $ n \geq 1 +  4\delta^2 T \mathcal{H} (\mathcal{F},\frac{1}{Ln}) / \mathcal{V}  $ we have, for any $\eta>0$,
\begin{multline*}
\mathbb{P}\Biggl\{
R_n(\hat{f}_{1:T}) \leq \inf_{f_{1:T}\in\mathcal{F}^T} R_n(f_{1:T})
\\
+
C_1 \sqrt{ \frac{ T\mathcal{H} (\mathcal{F},\frac{1}{Ln})}{n-1} }
+ C_2 \frac{ \log\left(\frac{4}{\eta}\right)}{\sqrt{n-1}}
+ \frac{C_3}{n}
\Biggr\} \geq 1-\eta,
\end{multline*}
where $C_1 = 4(1+\rho) L\sqrt{\mathcal{V}}$, $C_2  = 2(1+\rho)L\sqrt{\mathcal{V}} + 2\delta$ and $C_3 = 3[G_{\varepsilon}(0)+G_{X_1}(0)]/(1-\rho) + \mathcal{V}/(2\delta)$.
\end{thm}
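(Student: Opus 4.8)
The plan is to combine Corollary~\ref{coro-var}, which controls the deviation $r_n(f_{1:T}) - R_n(f_{1:T})$ for a single fixed predictor, with a union bound over a finite $\epsilon$-net of $\mathcal{F}^T$, and then to pass from the net back to $\mathcal{F}^T$ using the Lipschitz properties of the loss and of the chain. First I would fix $\epsilon = 1/(Ln)$ and let $\mathcal{F}_\epsilon$ be a minimal $\epsilon$-net of $\mathcal{F}$ in $\|\cdot\|_{\sup}$, so $\mathcal{F}_\epsilon^T$ is a net of $\mathcal{F}^T$ of cardinality $\mathcal{N}(\mathcal{F},\epsilon)^T$, and $\log$ of this is at most $T\,\mathcal{H}(\mathcal{F},\tfrac{1}{Ln})$. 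For each $g_{1:T}\in\mathcal{F}_\epsilon^T$, Corollary~\ref{coro-var} together with the Chernoff bound gives $\mathbb{P}(r_n(g_{1:T}) - R_n(g_{1:T}) > u) \leq \exp(-s u + s^2(1+\rho)^2L^2\mathcal{V}/(2(n-1)-2s(1+\rho)\delta L))$ for admissible $s$; optimizing in $s$ yields a Bernstein-type tail $\exp(-c(n-1)u^2/(\mathcal{V}(1+\rho)^2L^2 + u(1+\rho)\delta L))$. Setting this equal to $\eta/(2\mathcal{N}(\mathcal{F},\epsilon)^T)$ and solving the resulting quadratic in $u$ produces, after the usual $\sqrt{a+b}\le\sqrt{a}+\sqrt{b}$ split, a bound of the form $u \le c_1(1+\rho)L\sqrt{\mathcal{V}}\sqrt{(T\mathcal{H}+\log(2/\eta))/(n-1)} + c_2\delta(T\mathcal{H}+\log(2/\eta))/(n-1)$, valid provided $s$ stays in the admissible range — this is exactly where the hypothesis $n \geq 1 + 4\delta^2 T\mathcal{H}(\mathcal{F},\tfrac{1}{Ln})/\mathcal{V}$ enters, guaranteeing the optimal $s$ is feasible. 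I would separate the $T\mathcal{H}$ part (which becomes $C_1\sqrt{T\mathcal{H}/(n-1)}$) from the $\log(4/\eta)$ part (which becomes the $C_2\log(4/\eta)/\sqrt{n-1}$ term, absorbing the linear-in-$\log(1/\eta)$ contribution since $\log(4/\eta)/(n-1)\le \log(4/\eta)/\sqrt{n-1}$).

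Next comes the discretization step. For the estimator $\hat f_{1:T}$ pick $g_{1:T}\in\mathcal{F}_\epsilon^T$ with $\|\hat f_t - g_t\|_{\sup}\le\epsilon$ for each $t$. Then $R_n(\hat f_{1:T}) \le R_n(g_{1:T}) + |R_n(\hat f_{1:T}) - R_n(g_{1:T})|$ and $r_n(g_{1:T}) \le r_n(\hat f_{1:T}) + |r_n(\hat f_{1:T}) - r_n(g_{1:T})|$. Since $\ell$ is $L$-Lipschitz and $\|f(x)-g(x)\| \le \epsilon\|x\|$, each summand in $r_n$ and in $R_n$ changes by at most $L\epsilon\|X_{i-1}\|$, so both discretization errors are bounded by $L\epsilon \cdot \frac{1}{n-1}\sum_{i=2}^n \mathbb{E}\|X_{i-1}\|$ (for the $R_n$ term) and by $L\epsilon \cdot \frac{1}{n-1}\sum \|X_{i-1}\|$ for the $r_n$ term. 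A standard iteration of~\eqref{contract} and~\eqref{c2} bounds $\mathbb{E}\|X_{i-1}\|$ by $\frac{G_\varepsilon(0)}{1-\rho} + G_{X_1}(0)$ uniformly in $i$ (the geometric series of contraction ratios), and with $\epsilon = 1/(Ln)$ both errors are $O(1/n)$; these, combined with the analogous constant from Corollary~\ref{coro-var}'s remainder, assemble into the $C_3/n$ term. Chaining the inequalities: on the event of probability at least $1-\eta$ where the union bound holds simultaneously for all $g_{1:T}\in\mathcal{F}_\epsilon^T$, we get $R_n(\hat f_{1:T}) \le R_n(g_{1:T}) + (\text{disc}) \le r_n(g_{1:T}) + u + (\text{disc}) \le r_n(\hat f_{1:T}) + u + 2(\text{disc})$, and since $\hat f_{1:T}$ minimizes $r_n$, $r_n(\hat f_{1:T}) \le \inf_{f_{1:T}} r_n(f_{1:T}) \le \inf_{f_{1:T}} R_n(f_{1:T}) + \sup_{f_{1:T}}|r_n - R_n|$. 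The last supremum over the net costs another $u$, and the remaining uncovered part is again $O(1/n)$ by discretization.

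The main obstacle is bookkeeping rather than conceptual: one must track the admissibility of the Chernoff parameter $s$ carefully so that the quadratic-in-$u$ inversion actually lands in the regime where Corollary~\ref{coro-var}'s bound is valid, and this is precisely what forces the sample-size condition $n \geq 1 + 4\delta^2 T\mathcal{H}(\mathcal{F},\tfrac{1}{Ln})/\mathcal{V}$ and determines the exact shape of the constants $C_1,C_2,C_3$. A secondary subtlety is that $\hat f_{1:T}$ is random, so the two-sided union bound must control $\sup$ over the \emph{net} (which is finite) and the passage from the minimizer to the net must be done via the deterministic Lipschitz discretization bound, not via concentration — otherwise one would need a uniform deviation bound over all of $\mathcal{F}^T$, which is not available. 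Finally, one should double-check that the uniform moment bound on $\mathbb{E}\|X_{i-1}\|$ uses only $G_\varepsilon(0)$ and $G_{X_1}(0)$ (the values at the origin), which follows by writing $\|X_i\| \le \|F_i(X_{i-1},\varepsilon_i) - F_i(0,\varepsilon_i)\| + \|F_i(0,\varepsilon_i) - F_i(0,0)\| + \|F_i(0,0)\|$ and iterating, exactly as in the proof of Proposition~\ref{prop-risk}.
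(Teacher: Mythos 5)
Your overall architecture (the deviation bound of Corollary~\ref{coro-var} for each element of an $\epsilon$-net with $\epsilon=1/(Ln)$, a union bound over the net, then passage from the net to all of $\mathcal{F}^T$ by Lipschitz discretization, with the sample-size condition used to keep the Chernoff parameter admissible) is the paper's. But there is one genuine gap, and it is exactly the point you argue away in your last paragraph. The discretization error for the \emph{empirical} risk is $|r_n(\hat{f}_{1:T})-r_n(\hat{f}^{\epsilon}_{1:T})|\leq L\epsilon\,\frac{1}{n-1}\sum_{t=1}^{n-1}\|X_t\|$, whose right-hand side is random; with $\epsilon=1/(Ln)$ it is \emph{not} $O(1/n)$ deterministically (the innovations may be unbounded, e.g.\ Gaussian) — only its expectation is. Your moment bound on $\mathbb{E}\|X_t\|$ handles the $R_n$-discretization term, but you then claim ``both errors are $O(1/n)$'' and, worse, assert that the passage from the minimizer to the net ``must be done via the deterministic Lipschitz discretization bound, not via concentration.'' The paper does the opposite: it applies Theorem~\ref{Bernsteinineq} a second time, to the separately Lipschitz functional $f(x_1,\dots,x_n)=\sum_{t=1}^{n-1}\|x_t\|$, obtaining $\mathbb{P}\bigl(\sum_{t=1}^{n-1}\|X_t\|>z_{\rho,n}+y\bigr)\leq\exp\bigl(V_{(n)}/(4\delta^2)-y/(2\delta)\bigr)$ with $z_{\rho,n}=\frac{(n-1)G_\varepsilon(0)+G_{X_1}(0)}{1-\rho}$, and intersects this event with the uniform-over-the-net event. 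This concerns a single random variable, not a supremum over $\mathcal{F}^T$, so your objection does not apply; without this step the chain $r_n(\hat{f}^{\epsilon}_{1:T})\leq r_n(\hat{f}_{1:T})+O(1/n)$ is simply not valid on your good event.

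This second concentration step is also where several of the stated constants come from: the choice $y=2\delta\log(2/\eta)+V_{(n)}/(2\delta)$ splits the failure probability into $\eta/2+\eta/2$ (whence, together with the two-sided net bound, the $\log(4/\eta)$), contributes the $2\delta$ in $C_2$ and the $\mathcal{V}/(2\delta)$ in $C_3$. Your sketch cannot produce these terms — you attribute the $2\delta$ in $C_2$ to the linear part of the Bernstein inversion — which is a symptom of the missing ingredient. The remaining bookkeeping in your proposal (choice of $s$, absorbing the $\log(1/\eta)$ term using $T\mathcal{H}(\mathcal{F},\frac{1}{Ln})\geq 1$, and the role of $n\geq 1+4\delta^2T\mathcal{H}(\mathcal{F},\frac{1}{Ln})/\mathcal{V}$ in controlling the denominator $2(n-1)-2s(1+\rho)\delta L$) is consistent with the paper's proof and would go through once the control of $\sum_{t=1}^{n-1}\|X_t\|$ is added to the favorable event.
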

The theorem states that the predictor $\hat{f}_{1:T}$ predict as well as the best possible one up to an estimation error that vanishes at rate $\sqrt{n}$. For example, using (periodic) VAR(1) predictors in dimension $d,$ we get a bound in
$$ R_n(\hat{f}_{1:T}) \leq \inf_{f_{1:T}\in\mathcal{F}^T} R_n(f_{1:T}) + \mathcal{O}\left( d \sqrt{\frac{T\log(nd)}{n}} \right). $$

\begin{rem}
When the series is indeed stationary for a known $T$, it is to be noted that $(X_{iT+1},\dots,X_{i(T+1)})_{i\geq 0}$ is a time homogeneous Markov chain. In this case, our technique is not really necessary: it would be possible to apply the inequality from~\cite{DF15}. However, when $T$ is not known, this becomes impossible. In this case, one has to compare the empirical risks of $\hat{f}_{1:T}$ for the various possible $T$'s, and for most of them, $(X_{iT+1},\dots,X_{i(T+1)})_{i\geq 0}$ is not homogeneous. In this case, vectorization cannot help. On the other hand, our inequality can be used for period selection, as detailed in the next subsection.
\end{rem}

\subsection{Period and model selection}

We define a penalized estimator in the spirit of~\cite{M07}. Fix a maximal period $T_{\max}$, for example $T_{\max}=\lfloor n/2 \rfloor$. We propose the following penalized estimator for $T$:
$$ \hat{T} = \arg\min_{1\leq T \leq T_{\max}} \left[ r_n(\hat{f}_{1:T}) +
\frac{C_1}{2} \sqrt{ \frac{ T\mathcal{H} (\mathcal{F},\frac{1}{Ln})}{n-1} }\, \right]. $$
Using this estimator, we have the following result.
\begin{thm} \label{thm-selection}
 For any $\eta>0$ we have,
\begin{multline*}
\mathbb{P}\Biggl\{
R_n(\hat{f}_{1:\hat{T}}) \leq \inf_{1\leq T \leq T_{\max}}\  \inf_{f_{1:T}\in\mathcal{F}^T}
\Biggl[ R_n(f_{1:T})
\\
+
C_1 \sqrt{ \frac{ T\mathcal{H} (\mathcal{F},\frac{1}{Ln})}{n-1} }
+ C_2 \frac{ \log\left(\frac{4 T_{\max}}{\eta}\right)}{\sqrt{n-1}}
+ \frac{C_3}{n} \Biggr]
\Biggr\} \geq 1-\eta,
\end{multline*}
as soon as $ n \geq 1 +  4\delta^2  T_{\max} \mathcal{H} (\mathcal{F},\frac{1}{Ln}) / \mathcal{V}   $.
\end{thm}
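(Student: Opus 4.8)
The plan is to reduce the period-selection problem to the fixed-$T$ analysis of Theorem~\ref{thm-cvg}, combined with a union bound over the $T_{\max}$ candidate periods. First I would record, for each fixed $T\in\{1,\dots,T_{\max}\}$, the one-sided deviation control coming from Corollary~\ref{coro-var} together with the chaining/covering argument underlying Theorem~\ref{thm-cvg}: with probability at least $1-\eta/T_{\max}$ (so that the exceptional probabilities sum to $\eta$), one has simultaneously
$$
R_n(\hat f_{1:T}) \le r_n(\hat f_{1:T}) + \frac{C_1}{2}\sqrt{\frac{T\mathcal H(\mathcal F,\frac{1}{Ln})}{n-1}} + C_2\frac{\log(4T_{\max}/\eta)}{\sqrt{n-1}} + \frac{C_3'}{n}
$$
and
$$
r_n(\hat f_{1:T}) \le \inf_{f_{1:T}\in\mathcal F^T} r_n(f_{1:T}) \le \inf_{f_{1:T}\in\mathcal F^T} R_n(f_{1:T}) + \frac{C_1}{2}\sqrt{\frac{T\mathcal H(\mathcal F,\frac{1}{Ln})}{n-1}} + C_2\frac{\log(4T_{\max}/\eta)}{\sqrt{n-1}} + \frac{C_3''}{n},
$$
where the second line uses that $\hat f_{1:T}$ minimizes $r_n$ and then controls $r_n - R_n$ for a fixed near-optimal predictor; the $1/n$ terms absorb the covering-radius slack $\mathcal H(\mathcal F,\frac{1}{Ln})$ exactly as in Theorem~\ref{thm-cvg}. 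The condition $n \ge 1 + 4\delta^2 T_{\max}\mathcal H(\mathcal F,\frac{1}{Ln})/\mathcal V$ guarantees the range restriction $s < (n-1)/(L(1+\rho)\delta)$ in Corollary~\ref{coro-var} is met uniformly in $T\le T_{\max}$ when optimizing the exponent, just as the analogous hypothesis does in Theorem~\ref{thm-cvg}.

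Next I would exploit the definition of $\hat T$. Write $\mathrm{pen}(T) = \frac{C_1}{2}\sqrt{T\mathcal H(\mathcal F,\frac{1}{Ln})/(n-1)}$. By definition of $\hat T$, for every $T$,
$$
r_n(\hat f_{1:\hat T}) + \mathrm{pen}(\hat T) \le r_n(\hat f_{1:T}) + \mathrm{pen}(T).
$$
On the good event I bound the left side from below using the first displayed inequality applied with $T=\hat T$ (moving $R_n(\hat f_{1:\hat T}) - \mathrm{pen}(\hat T)$ downward), and I bound the right side from above using the second displayed inequality applied with that same $T$. The two copies of $\mathrm{pen}(\hat T)$ cancel, the two copies of $\mathrm{pen}(T)$ combine to give the full $C_1\sqrt{T\mathcal H/(n-1)}$ term, and the $C_2$ and $1/n$ contributions add up to the stated constants (choosing $C_3$ so that $C_3'+C_3'' \le C_3$, which matches the value $C_3 = 3[G_\varepsilon(0)+G_{X_1}(0)]/(1-\rho) + \mathcal V/(2\delta)$ from Theorem~\ref{thm-cvg}). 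Taking the infimum over $T$ and over $f_{1:T}\in\mathcal F^T$ on the right gives the claimed bound on the good event, which has probability at least $1-\eta$ by the union bound.

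The main obstacle is bookkeeping rather than conceptual: one must verify that the penalty $\mathrm{pen}(T)$ is chosen to dominate exactly the $T$-dependent fluctuation term in the fixed-$T$ bound, so that the cancellation at $\hat T$ is clean and no residual $\sqrt{\hat T}$-term survives. This is why $\mathrm{pen}(T)$ is taken to be precisely half of the $C_1$-term in Theorem~\ref{thm-cvg}: the "half from the penalty, half from the deviation bound" split is what makes $\mathrm{pen}(\hat T)$ cancel while leaving a full $C_1$-term at the oracle $T$. A secondary point requiring care is that the covering set $\mathcal F_{1/(Ln)}$ and all associated constants are the same for every $T$ (only $\mathcal F^T$ changes, and its entropy is $T$ times that of $\mathcal F$), so the union bound is genuinely over only $T_{\max}$ events and the $\log(T_{\max})$ factor enters only inside the $C_2$-term through $\log(4T_{\max}/\eta)$. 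Once these alignments are checked, the rest is the same Bernstein-plus-covering estimate already carried out for Theorem~\ref{thm-cvg}. \hfill\qed
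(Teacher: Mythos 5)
Your proposal follows essentially the same route as the paper's proof: a union bound over the $T_{\max}$ candidate periods combined with the fixed-$T$ covering/Bernstein analysis of Theorem~\ref{thm-cvg}, the moment control of $\sum_t \|X_t\|$ to absorb the covering radius $\epsilon = 1/(Ln)$ into the $C_3/n$ term, and the half-penalty trick by which $\mathrm{pen}(\hat T)=\frac{C_1}{2}\sqrt{\hat T \mathcal{H}(\mathcal{F},\frac{1}{Ln})/(n-1)}$ cancels at $\hat T$ while doubling to the full $C_1$ term at the oracle $T$. The only bookkeeping slip is that your two per-$T$ displays each carry a full $C_2$, which taken literally would produce $2C_2$ in the final bound; to land on the stated constant you need the sharper form the paper actually uses, namely $x_T \leq \frac{C_1}{2}\sqrt{T\mathcal{H}(\mathcal{F},\frac{1}{Ln})/(n-1)}+\frac{C_2}{2}\,\log(4T_{\max}/\eta)/\sqrt{n-1}$, so that each of the two uses of the deviation costs only $C_2/2$.
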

Note that $\hat{T}$ depends on $C_1=4(1+\rho) L\sqrt{\mathcal{V}} \leq 4(1+\rho) L \sqrt{V_1 + V_2}/(1-\rho) $. While $L$ depends only on the loss that is chosen by the statistician, in many applications $\rho$, $V_1$ and $V_2$ are unknown. We recommend to use an empirical criterion like the slope heuristic, introduced by~\cite{BM06}, to calibrate $C_1$. This procedure is as follows:
\begin{enumerate}
 \item define, for any $c>0$, $ \hat{T}(c) = \arg\min_{1\leq T \leq T_{\max}} [ r_n(\hat{f}_{1:T}) +
c  \sqrt{ T} ]$.
 \item fix a small step $\epsilon>0$ and define $\hat{c}$ as the maximiser of the jump $J_\epsilon(c)=\sqrt{\hat{T}(c+\epsilon)}-\sqrt{\hat{T}(c)}$.
 \item select $\hat{T}(2\hat{c})$.
\end{enumerate}
Many variants, details on fast implementations and references for theoretical results (in the i.i.d case) can be found in see~\cite{BMM12}.
A theoretical study of the slope heuristic in the context could be the object of future works.

\subsection{Simulation study}

As an illustration we simulate $ X_{t+1} = a_t X_t + \varepsilon_t $
for $t=1,\dots,400$, where $a_{t+4}=a_t$, $(a_1,a_2,a_3,a_4) = (0.8,0.5,0.9,-0.7)$ and $\varepsilon_t\sim\mathcal{N}(0,1)$. The data is shown in Figure~\ref{figure-data} and the autocorrelation function in Figure~\ref{figure-acf}. It is clear that a statistician trying to estimate an AR(1) model with a fixed coefficient would be puzzled by this situation.
\begin{figure}[h]
\vspace{-.5truecm}
\begin{center}
\includegraphics[scale=0.37]{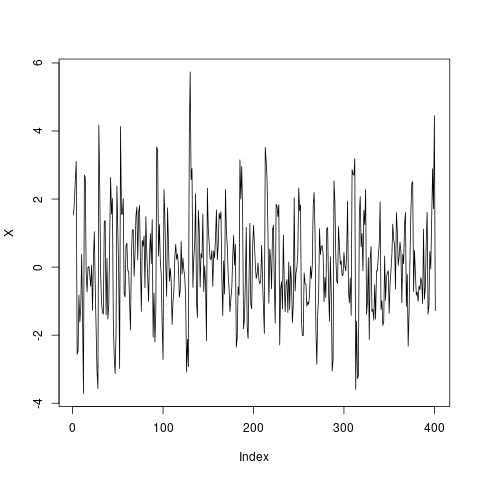}
\end{center}
\vspace{-.5truecm}
\caption{Simulated data.}
\label{figure-data}
\end{figure}
\begin{figure}[h]
\begin{center}
\includegraphics[scale=0.37]{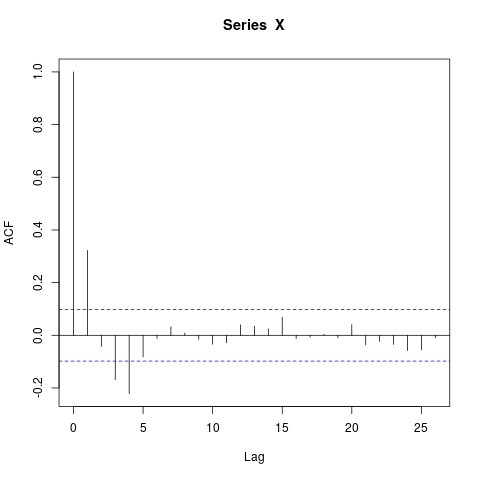}
\end{center}
\vspace{-.5truecm}
\caption{Autocorrelation function of the data.}
\label{figure-acf}
\end{figure}

The dependence of $r_n(\hat{a}_{1:T})$ with respect to $T\in\{1,...,T_{\max}\}$ with $T_{\max}=20$ is shown in Figure~\ref{figure-erm}.
\begin{figure}[h]
\vspace{-.5truecm}
\begin{center}
\includegraphics[scale=0.37]{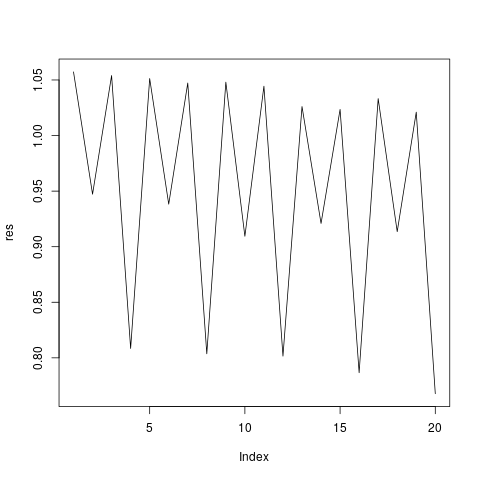}
\end{center}
\vspace{-.5truecm}
\caption{The empirical risk as a function of $T$.}
\label{figure-erm}
\end{figure}
The choice $T=4$ leads to an improvement with respect to $T<4$. On the other hand, we observe a slow linear decrease of $r_n(\hat{a}_{1:4})$, $r_n(\hat{a}_{1:8})$, $r_n(\hat{a}_{1:12})$ \dots this is a sign of overfitting. And indeed,
\begin{enumerate}
 \item for $c< 0.008 $, $\hat{T}(c)=20$,
 \item for $0.009 < c< 0.239 $, $\hat{T}(c)=4$,
 \item for $ 0.240 < c$, $\hat{T}(c)=1$.
\end{enumerate}
Thus, $\hat{c}\simeq 0.0085$ and we choose $\hat{T}(2\hat{c})=\hat{T}(0.017)=4$.

\subsection{Acknowledgements}

We thank the anonymous Referees for their very constructive comments that helped to improve the clarity of the paper.

\section{Proofs}
\label{section:proof}

\begin{proof}[Proof of Lemma~\ref{McD}] The first point will be proved by backward induction. The result is obvious for $t=n$, since $g_n=f$. Assume that it is true at step $t$, and let us prove it
at step $t-1$. By definition
\begin{multline*}
 g_{t-1}(X_1, \ldots, X_{t-1})
 ={\mathbb E}[g_t(X_1, \ldots, X_t)|{\mathcal F}_{t-1}] \\
 = \int g_t(X_t, \ldots, X_{t-1}, F_t(X_{t-1},y)) P_{\varepsilon}(dy)\, .
\end{multline*}
It follows that
\begin{multline}\label{triv1}
|g_{t-1}(x_1,  \ldots, x_{t-1})-g_{t-1}(x'_1,  \ldots, x'_{t-1})|\\ \leq \int |g_t(x_1, \ldots,x_{t-1}, F_t(x_{t-1},y))
\\
-g_t(x'_1, \ldots,x'_{t-1}, F_t(x'_{t-1},y))|
P_{\varepsilon}(dy)\, .
\end{multline}
Now, by assumption and condition \eref{contract},
\begin{align*}\int |g_t(x_1, \ldots, F_t(x_{t-1},y))
& -g_t(x'_1, \ldots, F_t(x'_{t-1},y))|
P_{\varepsilon}(dy) \nonumber \\
&\leq d(x_1,x'_1)+\cdots + d(x_{t-1}, x'_{t-1})
\\
& + K_{n-t}(\rho) \int d(F_t(x_{t-1},y), F_t(x'_{t-1},y))P_{\varepsilon}(dy)
 \nonumber \\
&\leq d(x_1,x'_1)+\cdots  + (1+\rho K_{n-t}(\rho)) d(x_{t-1}, x'_{t-1}) \nonumber \\
&\leq d(x_1,x'_1)+\cdots  +  K_{n-t+1}(\rho) d(x_{t-1}, x'_{t-1}) \, .
\end{align*}
Point 1 follows from this last inequality and~\eqref{triv1}.

Let us now prove Point 2. First note that
\begin{eqnarray}
|d_1|&=&\Big|g_1(X_1)-\int g_1(x)P_{X_1}(dx)\Big| \nonumber \\
&\leq&  K_{n-1}(\rho)\int d(X_1,x)P_{X_1}(dx) \nonumber \\
&=& K_{n-1}(\rho) G_{X_1}(X_1) \,
\end{eqnarray}
where the inequality comes from the first point of Lemma~\ref{McD}.
In the same way, for $t \geq 2$,
\begin{align*}
 |d_t| & = \big|g_t(X_1, \cdots, X_t)-{\mathbb E}[g_t(X_1, \cdots, X_t)|{\mathcal F}_{t-1}]\big|\\
 &\leq  \int \big |g_t(X_1, \cdots, F_t(X_{t-1}, \varepsilon_t))
  \\
  & \quad \quad -g_t(X_1, \cdots, F_t(X_{t-1}, y))\big| P_{\varepsilon}(dy)\\
 &\leq K_{n-t}(\rho)\int  d(F_t(X_{t-1},\varepsilon_t),
 F_t(X_{t-1}, y))P_{\varepsilon}(dy)\\
 &  = K_{n-t}(\rho)
  H_{t,\varepsilon}(X_{t-1},\varepsilon_t) \, .
\end{align*}

Finally, the proof of Point 3 is direct: if \eref{c2} is true, then
$$
H_{t,\varepsilon}(x,y)= \int d(F_t(x,y),F_t(x,y'))P_{\varepsilon}(dy')
\leq \int C\delta(y,y')P_{\varepsilon}(dy')= G_\varepsilon(y) \, .
$$
The proof of the proposition is now complete.
\end{proof}

We state a lemma that will be used in the following proofs.
\begin{lemma} \label{lemma_moments}
Under the assumptions of Section~\ref{section:autoregression} we have
$$ \forall n\in\mathbb{N}\setminus\{0\}\text{, }
\mathbb{E} \|X_n\|
\leq \frac{G_\varepsilon(0)}{1-\rho} + \rho^{n-1} G_{X_1}(0).
$$
\end{lemma}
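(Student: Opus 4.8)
The plan is a short induction on $n$, obtained by unfolding the defining recursion $X_t = f_t^*(X_{t-1}) + \varepsilon_t$. First I would make the right-hand side transparent: in the additive model of Section~\ref{section:autoregression} one has $F_t(x,y) = f_t^*(x) + y$, so that $d\big(F_t(x,y),F_t(x,y')\big) = \|y-y'\|$; hence $\delta$ is the norm of $\mathbb{R}^d$, $C=1$, and therefore $G_\varepsilon(0) = \int \|y\|\,P_\varepsilon(dy) = \mathbb{E}\|\varepsilon_2\|$ while $G_{X_1}(0) = \int \|x\|\,P_{X_1}(dx) = \mathbb{E}\|X_1\|$, both finite under the standing moment hypotheses.

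For $n=1$ the claim is immediate, since $\mathbb{E}\|X_1\| = G_{X_1}(0) \le G_{X_1}(0) + G_\varepsilon(0)/(1-\rho)$. For the inductive step, assume the bound at $n-1\ge 1$; then, by the triangle inequality in $\mathbb{R}^d$, the contraction property of $f_n^* \in \mathcal{F}$ (used in the form $\|f_n^*(x)\|\le\rho\|x\|$, which is the relevant instance of the Lipschitz hypothesis for the AR(1) and VAR(1) predictors considered here), and $\mathbb{E}\|\varepsilon_n\| = G_\varepsilon(0)$, one gets
\[
\mathbb{E}\|X_n\| \le \mathbb{E}\|f_n^*(X_{n-1})\| + \mathbb{E}\|\varepsilon_n\| \le \rho\,\mathbb{E}\|X_{n-1}\| + G_\varepsilon(0).
\]
Substituting the induction hypothesis $\mathbb{E}\|X_{n-1}\|\le G_\varepsilon(0)/(1-\rho) + \rho^{\,n-2}G_{X_1}(0)$ and using $\rho/(1-\rho)+1 = 1/(1-\rho)$ yields exactly $\mathbb{E}\|X_n\| \le G_\varepsilon(0)/(1-\rho) + \rho^{\,n-1}G_{X_1}(0)$, which closes the induction. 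Equivalently, one may unroll the recursion completely, obtaining $\|X_n\| \le \rho^{\,n-1}\|X_1\| + \sum_{j=0}^{n-2}\rho^{\,j}\|\varepsilon_{n-j}\|$, take expectations, and bound $\sum_{j=0}^{n-2}\rho^{\,j}\le (1-\rho)^{-1}$.

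There is no genuine obstacle; the only point needing care is the geometric bookkeeping — checking that the power of $\rho$ multiplying $G_{X_1}(0)$ comes out exactly $\rho^{\,n-1}$, and that the $\varepsilon$-contributions, whose exact sum is only $G_\varepsilon(0)(1-\rho^{\,n-1})/(1-\rho)$, are simply majorised by the full geometric series $G_\varepsilon(0)/(1-\rho)$ to produce the clean, $n$-free first term. Note also that the centering of the $\varepsilon_t$'s is not used in this $L^1$ bound; only $\mathbb{E}\|\varepsilon_2\|<\infty$ and the contraction at the origin are needed.
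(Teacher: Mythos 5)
Your proof is correct and follows essentially the same route as the paper: the paper also uses $\mathbb{E}\|X_1\|=G_{X_1}(0)$, the one-step bound $\mathbb{E}\|X_n\|\leq\rho\,\mathbb{E}\|X_{n-1}\|+G_\varepsilon(0)$ (with the same implicit use of $\|f_n^*(x)\|\leq\rho\|x\|$), and an induction yielding the geometric sum $(1+\rho+\dots+\rho^{n-2})G_\varepsilon(0)+\rho^{n-1}G_{X_1}(0)$, majorised by $G_\varepsilon(0)/(1-\rho)$. The only cosmetic difference is that you substitute the already-relaxed bound inside the induction hypothesis rather than keeping the exact partial sum, which changes nothing.
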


{\it Proof of Lemma~\ref{lemma_moments}}
 By definition of $G_{X_1}$, $ \mathbb{E} \|X_1\|  =  \int \|x-0\| {\rm d}P_{X_1}(x) = G_{X_1}(0)  $. Then,
$$
 \mathbb{E} \|X_n\|   = \mathbb{E} \|f_n(X_{n-1}) + \varepsilon \|  
  \leq \mathbb{E} \|f_n(X_{n-1})\|  + \mathbb{E} \|\varepsilon \|  
  \leq \rho \mathbb{E} \|X_{n-1}\|  + G_\varepsilon(0).
$$
 So, by induction, for $n>1$,
\begin{align*}
\mathbb{E} \|X_n\|
& \leq (1+\rho+\dots+\rho^{n-2})G_\varepsilon(0) + \rho^{n-1} G_{X_1}(0) \\
& \leq \frac{G_\varepsilon(0)}{1-\rho} + \rho^{n-1} G_{X_1}(0).\qquad\qquad \qquad\qquad\square
 \end{align*}

\noindent\emph{Proof of Proposition~\ref{cram}.} Let $\delta=a/K_{n-1}(\rho).$  Since $\mathbb{E}\,[  d_1 ] =0$, it follows that, for any $s \in   [0, \delta )$,
\begin{eqnarray}\label{finsa3f}
\mathbb{E}\,[e^{s d_1 } ] &=& 1+ \sum_{j=2}^{\infty} \frac{s^j}{j!}\, \mathbb{E}\,[   (d_1)^j ] \nonumber\\
&\leq& 1+ \sum_{i=2}^{\infty} \Big(\frac{s}{\delta} \Big)^j \, \mathbb{E}\,\Big[\frac{1}{j!} | \delta  d_1|^j \Big].
\end{eqnarray}
Note that, for $s\geq 0$,
\begin{align}\label{constantsecond}
\frac{s^j}{j !} e^{-s} &\leq \frac{j^j}{j !} e^{-j} 
\leq 2 e^{-2}, \quad \text{for any $j\geq 2$,}
\end{align}
where the last inequality follows from the fact that $j^j e^{-j}/j!$ is decreasing in $j$. Notice that the equality in \eref{constantsecond}
 is reached at $s=j=2$. By \eref{constantsecond}, Lemma~\ref{McD} and the hypothesis of the proposition, we deduce  that
\begin{eqnarray}\label{fisa3f}
\mathbb{E}\,\Big[\frac{1}{j!} |\delta d_1|^j \Big]   &\leq& \ 2e^{-2}  \mathbb{E}\, [e^{\delta |d_1|}  ] \nonumber\\
 &\leq& \ 2e^{-2} {\mathbb E} \Big[ \exp \Big( a G_{X_1}(X_1)\Big)\Big] \nonumber\\
 &\leq& \ 2e^{-2} K_1.
\end{eqnarray}
Combining Inequalities (\ref{finsa3f}) and (\ref{fisa3f}),  we get, for any $s \in [0, \delta )$,
$$
\mathbb{E}\,[e^{s d_1 } ] \ \leq\ 1+ \sum_{j=2}^{\infty} \frac{2}{e^{2}}\Big(\frac{s}{ \delta } \Big)^j  K_1
\ =\ 1+  \frac{2}{e^{2}} \frac{s^2 K_1 \delta^{-2} }{1-s \delta^{-1}}  
\ \leq\  \exp \left( \frac{2}{e^{2}} \frac{s^2 K_1 \delta^{-2} }{1-s \delta^{-1}}   \right).
$$
In the same way, for $n\geq i>1$, using Lemma~\ref{McD},
\begin{eqnarray*}
\mathbb{E}\,[e^{s d_i }|\mathcal{F}_{i-1} ] &=& 1+ \sum_{j=2}^{\infty} \Big(\frac{s}{\delta} \Big)^j  \, \mathbb{E}\,\Big[\frac{1}{j!} | \delta  d_i|^j |\mathcal{F}_{i-1}  \Big]
\\
&\leq& 1+ \sum_{j=2}^{\infty} \left(\frac{s}{\delta} K_{n-i}(\rho) \right)^j  \, \mathbb{E}\,\Big[\frac{1}{j!} | \delta  G_{\varepsilon}(\varepsilon_i)|^j |\mathcal{F}_{i-1}\Bigr]
\\
&\leq& 1+ \sum_{j=2}^{\infty} \left(\frac{s}{\delta} \frac{K_{n-i}(\rho)}{K_{n-1}(\rho)} \right)^j  \, \mathbb{E}\,\Big[\frac{1}{j!} | a  G_{\varepsilon}(\varepsilon_i)|^j\Bigr]
\\
&\leq& 1+ \left(\frac{K_{n-i}(\rho)}{K_{n-1}(\rho)}\right)^{2} \sum_{j=2}^{\infty} \left(\frac{s}{\delta}  \right)^j \, 2e^{-2}K_2
\end{eqnarray*}
where we used~\eqref{constantsecond} again, and the fact that $K_{n-i}(\rho)/K_{n-1}(\rho)\leq 1$ for any $i \in [2, n]$ which implies $[K_{n-i}(\rho)/K_{n-1}(\rho)]^j\leq [K_{n-i}(\rho)/K_{n-1}(\rho)]^2$ for any $j\geq 2$. So, for any $s \in [0,   \delta )$,
\begin{eqnarray*}
  \mathbb{E}\,[e^{s d_i  }|\mathcal{F}_{i-1}] \  &\leq& 
   1+  \frac{2}{e^{2}} \frac{s^2 K_2 \delta^{-2} }{1-s \delta^{-1}}   \left(\frac{K_{n-i}(\rho)}{K_{n-1}(\rho)}\right)^{2}
  \\
  &\leq& \ \exp \left(\frac{2}{e^{2}} \frac{s^2 K_2 \delta^{-2} }{1-s \delta^{-1}} \Big(\frac {K_{n-i}(\rho)}{K_{n-1}(\rho)}\Big)^2  \right) \, .
\end{eqnarray*}
Using the tower property of conditional expectation,  we have, for any $s \in [0, \delta )$,
\begin{eqnarray*}
\mathbb{E}\,\big[e^{ sS_i} \big] &=&  \mathbb{E}\,\big[ \,\mathbb{E}\, [e^{ s S_i} |\mathcal{F}_{i-1}  ] \big] \nonumber\\
&=&  \mathbb{E}\,\big[ e^ { s S_{i-1}} \mathbb{E}\,  [e^ { s d_i}  |\mathcal{F}_{i-1}  ] \big] \nonumber \\
&\leq & \mathbb{E}\,\big[  e^ { s S_{i-1}}  \big] \exp \left( \frac{2}{e^{2}} \frac{t^2 K_2 \delta^{-2} }{1-t \delta^{-1}}   \Big(\frac {K_{n-i}(\rho)}{K_{n-1}(\rho)}\Big)^2 \right).
\end{eqnarray*}
By recursion,
\begin{align*}
 \mathbb{E}\,\big[e^{ sS_i} \big] & \leq
  \mathbb{E}\,\big[  e^ { s S_{1}}  \big] \exp \left( \frac{2}{e^{2}} \frac{t^2 K_2 \delta^{-2} }{1-t \delta^{-1}} \sum_{i=2}^n  \Big(\frac {K_{n-i}(\rho)}{K_{n-1}(\rho)}\Big)^2 \right) \\
  & \leq   \ \exp \left( \frac{2}{e^{2}} \frac{s^2 K_1 \delta^{-2} }{1-s \delta^{-1}}   \right)\exp \left( \frac{2}{e^{2}} \frac{t^2 K_2 \delta^{-2} }{1-t \delta^{-1}} \sum_{i=2}^n  \Big(\frac {K_{n-i}(\rho)}{K_{n-1}(\rho)}\Big)^2 \right) \\
  & =
  \exp \left( \frac{s^2 K  \delta^{-2} }{1-s \delta^{-1}}   \right),
\end{align*}
where $$K=\frac{2}{e^{2}} \bigg( K_1+ K_2\sum_{i=2}^{n} \Big(\frac {K_{n-i}(\rho)}{K_{n-1}(\rho)}\Big)^2 \bigg).$$
Then using the exponential Markov  inequality, we deduce that,
for any $x\geq 0$ and $s \in [0, \delta )$,
\begin{eqnarray*}
  \mathbb{P}\left( S_{n} \geq x \right)
 &\leq&  \mathbb{E}\, [e^{s\,(S_n -x) }  ] \nonumber\\
 &\leq&   \exp \left(-s x  +  \frac{s^2 K  \delta^{-2} }{1-s \delta^{-1}}   \right)\, .
\end{eqnarray*}
The  minimum is reached at $$s=s(x):= \frac{x\delta^2/K}{x\delta/K + 1 + \sqrt{1+x\delta/K}} .$$ The proposition is proven. \hfill\qed

\noindent\emph{Proof of Proposition~\ref{classic}.}
Denote
$$
u_{k-1}(x_1, \ldots, x_{k-1})= \text{ess}\inf_{\varepsilon_k} g_k(x_1, \ldots, F_k(x_{k-1}, \varepsilon_k))
$$
and
$$
v_{k-1}(x_1, \ldots, x_{k-1})= \text{ess}\sup_{\varepsilon_k} g_k(x_1, \ldots, F_k(x_{k-1}, \varepsilon_k)).
$$
From the proof of Lemma~\ref{McD},  it is easy to see that
$$
u_{k-1}(X_1, \ldots, X_{k-1}) \leq d_k \leq v_{k-1}(X_1, \ldots, X_{k-1})\, .
$$
By Lemma~\ref{McD} and the hypothesis of the proposition, it follows that
$$
v_{k-1}(X_1, \ldots, X_{k-1})-u_{k-1}(X_1, \ldots, X_{k-1})
\leq K_{n-k}(\rho) M_k\, .
$$
Following exactly the proof of  Theorem 3.1 of \cite{R13} with $\Delta_k =  K_{n-k}(\rho) M_k$,
we get~\eqref{rio1} and~\eqref{rio2}.  Since
$\ell^*(x) \geq (x^2-2x) \ln(1-x)$ for any $x \in [0, 1]$, \eqref{riosbound}
follows from~\eqref{rio2}. \hfill\qed

\begin{proof}[Proof of Proposition~\ref{prop-risk}]
 Put $k=\lfloor (n-2)/T \rfloor$, and consider the three possible cases $kT\leq n-2 \leq (k+1/2)T$, $(k+1/2)T< n-2 < (k+1)T-1$ and $n-2 = (k+1)T-1$.
 
\noindent {\it First case}. We write:
 \begin{multline}
 \label{split31}
 R_n(f_{1:T})  =  \mathbb{E}\left[ \frac{1}{n-1}\sum_{i=2}^n \ell\left(X_i - f_{i[T]} (X_{i-1}) \right)\right]
 \\
  = \frac{ kT R_{T+1}(f_{1:T})}{n-1} +  \mathbb{E}\left[ \frac{1}{n-1}\sum_{i=kT+2}^n \ell\left(X_i - f_{i[T]} (X_{i-1}) \right)\right].
 \end{multline}
First,
\begin{align*}
 \left| R_n(f_{1:T})-\frac{ kT R_{T+1}(f_{1:T})}{n-1}  \right|
& = \frac{1}{n-1} \sum_{i=kT+2}^{n} \mathbb{E}\left[ | \ell\left(X_i - f_{i[T]} (X_{i-1}) \right)| \right]
\\
& \leq \frac{1}{n-1} \sum_{i=kT+2}^{n} L \mathbb{E} \|X_i\|  + \rho L \mathbb{E} \|X_{i-1}\|
\\
& \leq  \frac{(n-2)-kT }{n-1} L(1+\rho) \left[ \frac{G_\varepsilon(0)}{1-\rho} + G_{X_1}(0)\right]
\end{align*}
where we used Lemma~\ref{lemma_moments} and $\rho^{n-1}<1$ for the last inequality.
In the same way,
\begin{align*}
\left| \frac{ kT R_{T+1}(f_{1:T})}{n-1} - R_{T+1}(f_{1:T})\right| & = \left(1-\frac{kT}{n-1} \right)R_{T+1}(f_{1:T}) \\
& =  \frac{(n-1)-kT}{n-1} \frac{1}{T}\sum_{i=2}^{T+1}  \mathbb{E}\left[ | \ell\left(X_i - f_{i} (X_{i-1}) \right)| \right]\\
& \leq  \frac{(n-2)-kT+1}{n-1} L(1+\rho) \left[ \frac{G_\varepsilon(0)}{1-\rho} + G_{X_1}(0)\right].
\end{align*}
Combining both inequalities leads to:
\begin{align*}
\left| R_n(f_{1:T})-R_{T+1}(f_{1:T}) \right|
&
\leq \frac{2[(n-2)-kT]+1}{n-1}L(1+\rho) \left[ \frac{G_\varepsilon(0)}{1-\rho} + G_{X_1}(0)\right]
\\
& \leq \frac{T+1}{n-1} C_0
\end{align*}
by definition of $C_0$ and as in the first case, $(n-2)-kT \leq T/2$.

\noindent {\it Case 2}. We write the decomposition of $R_n(f_{1:T})$ in a different way from what we did in~\eqref{split31}:
 \begin{multline*}
 R_n(f_{1:T})  =  \mathbb{E}\left[ \frac{1}{n-1}\sum_{i=2}^n \ell\left(X_i - f_{i[T]} (X_{i-1}) \right)\right]
 \\
  = \frac{ (k+1)T R_{T+1}(f_{1:T})}{n-1} -  \mathbb{E}\left[ \frac{1}{n-1}\sum_{i=n+1}^{(k+1)T+1} \ell\left(X_i - f_{i[T]} (X_{i-1}) \right)\right].
 \end{multline*}
Similar derivations lead to
$$
 \left| R_n(f_{1:T})-\frac{ (k+1)T R_{T+1}(f_{1:T})}{n-1}  \right|
 \leq  \frac{(k+1)T+1-n}{n-1} C_0
$$
and
$$
\left| \frac{ (k+1)T R_{T+1}(f_{1:T})}{n-1} - R_{T+1}(f_{1:T})\right|
 \leq  \frac{(k+1)T+1-n}{n-1}C_0,
$$
and combining both inequalities and $(k+1/2)T< n-2$ gives:
$$
\left| R_n(f_{1:T})- R_{T+1}(f_{1:T}) \right| \leq \frac{T}{n-1} C_0.
$$

\noindent {\it Case 3}. In this case,
$$ R_n(f_{1:T}) = \frac{(k+1)T R_{T+1}(f_{1:T})}{n-1} = R_{T+1}(f_{1:T}) $$
and so
$$ |R_n(f_{1:T}) - R_{T+1}(f_{1:T}) | = 0 .$$

So, in Cases 1, 2 and 3, the largest bound
$$\left| R_n(f_{1:T})-R_{T+1}(f_{1:T}) \right|
\leq \frac{T+1}{n-1}C_0 $$
holds\footnote{We thank one of the anonymous Referees who suggested improvements in this proof that led to the bound $\frac{T+1}{n-1}C_0$, instead of the bound $\frac{2T+1}{n-1}C_0$ stated in the first version of this paper.}.
\end{proof}

\noindent\emph{Proof of Corollary~\ref{coro-var}.}
By definition, we have $ X_t = F_t(X_{t-1},\varepsilon_t) = f_{t}^*(X_{t-1}) + \varepsilon_t$.
So $\| F_t(x,\varepsilon_t)-F_t(x',\varepsilon_t) \| = \| f_{t}^*(x) - f_{t}^*(x') \| \leq \rho \|x-x'\| $ and so~\eqref{contract} is satisfied, and $\|F_t(x,y)-F_t(x,y')\|=\|y-y'\|$ so that~\eqref{c2} is satisfied with $C=1$. We consider the random variable $S_n = f(X_1,\dots,X_n)-\mathbb{E}[f(X_1,\dots,X_n)]$, where
$$ f(x_1,\dots,x_n) = \frac{1}{L(\rho+1)} \sum_{t=2}^n \ell\left(x_t - f_{t[T]}(x_{t-1}) \right). $$
Remark that
\begin{align*}
& | f(x_1,\dots,x_n) - f(x_1,\dots,x_t',\dots,x_n) | \\
    & \leq \frac{\left| \ell \left(x_{t+1} - f_{(t+1)[T]}(x_{t}) \right) - \ell \left(x_{t+1} - f_{(t+1)[T]}(x_{t}') \right)\right|}{L(\rho+1)}
    \\
      & \quad \quad  + \frac{ \left|  \ell \left(x_t - f_{t[T]}(x_{t-1}) \right) - \ell \left(x_t' - f_{t[T]}(x_{t-1}) \right) \right|}{L(\rho +1)}
        \\
   & \leq \frac{\|f_{(t+1)[T]}(x_{t})-f_{(t+1)[T]}(x_{t}')\| + \|x_t-x_t'\|}{\rho+1} \\
   & \leq \|x_t - x_t'\|.
\end{align*}
So the assumptions of Theorem~\ref{Bernsteinineq} are satisfied and
$$ \mathbb{E}\exp\left( \pm t S_n \right) \leq \exp\left(\frac{t^2 V_{(n)}}{2-2t\delta }\right) .$$
Remind that $S_n = \frac{n-1}{L(1+\rho)}\left[ r_n(f_{1:T})- \mathbb{E}[r_n(f_{1:T})]\right] $, and $ R_n(f_{1:T}) = \mathbb{E}[r_n(f_{1:T})] $, so
that by setting $s= {t(n-1)}/{L(1+\rho)}$ we  end the proof.
\hfill \qed

\noindent\emph{Proof of Theorem~\ref{thm-cvg}.}
Fix $\epsilon > 0$. We have, for any $f_{1:T}\in\mathcal{F}_{\epsilon}$, the deviation inequality from Corollary~\ref{coro-var}. A union bound on $f_{1:T}\in\mathcal{F}_{\epsilon}$ leads to, for any $s\in\left[0,\frac{n-1}{L(1+\rho)\delta}\right)$,
\begin{align*}
& \mathbb{P}\left(\sup_{(f_{1:T})\in\mathcal{F}_{\epsilon}^T}|r_n(f_{1:T})-R_n(f_{1:T})| > x \right) \\
& \leq \sum_{(f_{1:T})\in\mathcal{F}_{\epsilon}} \mathbb{P}\left(|r_n(f_{1:T})-R_n(f_{1:T})| > x \right)
\\
& \leq \sum_{(f_{1:T})\in\mathcal{F}_{\epsilon}^T} \mathbb{E}\exp\left( s |r_n(f_{1:T})-R_n(f_{1:T})| - s x\right)
\\
& \leq 2 \mathcal{N}(\mathcal{F} ,\epsilon)^T \exp\left( \frac{s^2(1+\rho)^2 L^2 \mathcal{V}}{2(n-1)-2s(1+\rho)\delta L} -s x \right).
\end{align*}
Now, for any $f_{1:T}\in\mathcal{F}^T$ we construct $f_{1:T}^{\epsilon}=(f_1^\epsilon,\dots,f_T^{\epsilon})$ by chosing, for any $t\in\{1,\dots,T\}$, a function $f_t^\epsilon$ such that $\|f_t-f_t^\epsilon\|_{\sup}\leq \epsilon$, as allowed from the definition of $\mathcal{F}_{\epsilon} $.
Obviously
\begin{multline*}
\left| \ell(X_t-f^\epsilon_{t[T]} (X_{t-1})) -  \ell(X_t - f_{t[T]} (X_{t-1}) )\right|
\\
\leq L \|f^\epsilon_{t[T]} (X_{t-1})   - f_{t[T]} (X_{t-1})\|
\leq L\epsilon \|X_{t-1}\|
\end{multline*}
and as a consequence,
$$|r_n(f_{1:T})-r_n(f_{1:T}^\epsilon)| \leq   L\epsilon\cdot\frac{\sum_{t=1}^{n-1}\|X_t\|}{n-1},$$
and
$$|R_n(f_{1:T})-R_n(f_{1:T}^\epsilon)| \leq \epsilon L\cdot\frac{\sum_{t=1}^{n-1}\mathbb{E}\|X_t\|}{n-1} . $$
Using Theorem~\ref{Bernsteinineq} with $f(X_1,\dots,X_n)=\sum_{t=1}^{n-1} \|X_t\|$ we have, for any $y>0$,
\begin{align*}
& \mathbb{P}\left(\sum_{t=1}^{n-1}\|X_t\| > \sum_{t=1}^{n-1}\mathbb{E}\|X_t\| + y \right) \\
& \leq \mathbb{E}\exp\left[\frac{1}{2\delta}\left(\sum_{t=1}^{n-1}\|X_t\| - \sum_{t=1}^{n-1}\mathbb{E}\|X_t\| - y\right) \right]
\\
& \leq \exp\left(\frac{\left(\frac{1}{2\delta}\right)^2 V_{(n)}}{2\left(1-\frac{1}{2}\right)} - \frac{ y}{2\delta} \right) = \exp\left(\frac{ V_{(n)} }{4\delta^2}-\frac{ y }{2\delta} \right).
\end{align*}
Lemma~\ref{lemma_moments} leads to
\begin{align*}
 \sum_{t=1}^{n-1}\mathbb{E} \|X_t\|
 & \leq \sum_{t=1}^{n-1} \left[ \frac{G_{\epsilon}(0) }{1-\rho} + \rho^{t-1} G_{X_1}(0) \right] \\
 & \leq \frac{ (n-1) G_{\epsilon}(0) + G_{X_1}(0) }{1-\rho} =: z_{\rho,n}
\end{align*}
where we introduce the last notation for short. Now let us consider the ``favorable'' event
$$
\mathcal{E}= \left\{ \sum_{t=1}^{n-1}\|X_t\| \leq z_{\rho,n} + y \right\}
  \bigcap \left\{\sup_{f_{1:T}\in\mathcal{F}_\epsilon}|r_n(f_{1:T})-R_n(f_{1:T})| \leq x  \right\} .
$$
The previous inequalities show that
\begin{multline}
\label{proba-e}
\mathbb{P}\left( \mathcal{E}^c\right)\leq  \exp\left(\frac{ V_{(n)}}{4\delta^2}-\frac{ y }{2\delta} \right) \\
+2 \mathcal{N}(\mathcal{F},\epsilon)^T 
 \exp\left( \frac{s^2(1+\rho)^2 L^2 \mathcal{V}}{2(n-1)-2s(1+\rho)\delta L} -s x \right).
\end{multline}
On $\mathcal{E}$, we have:
\begin{align*}
R_n(\hat{f}_{1:T})
& \leq R_n(\hat{f}_{1:T}^{\epsilon}) +\epsilon L\frac{z_{\rho,n} }{n-1} \\
& \leq r_n(\hat{f}_{1:T}^{\epsilon}) + x  +\epsilon L \frac{z_{\rho,n} }{n-1}
\\
& \leq r_n(\hat{f}_{1:T}) + x + \epsilon L \left[ 2  \frac{z_{\rho,n} }{n-1}+ \frac{y}{n-1}\right]
\\
& = \min_{f_{1:T} \in\mathcal{F}_{\epsilon}^T } r_n(f_{1:T}) + x + \epsilon L   \frac{2 z_{\rho,n}+y }{n-1}
\\
& \leq \min_{f_{1:T} \in\mathcal{F}_{\epsilon}^T } R_n(f_{1:T}) + 2x + \epsilon L   \frac{2 z_{\rho,n} +y }{n-1}
\\
& \leq \min_{f_{1:T} \in\mathcal{F}^T } R_n(f_{1:T}) + 2x + \epsilon L  \frac{3 z_{\rho,n}+y }{n-1}  .
\end{align*}
In particular, the choice $\epsilon = 1/(Ln)$ ensures:
\begin{equation}
\label{almost-done}
R_n(\hat{f}_{1:T}) \leq \min_{f_{1:T} \in\mathcal{F}^T } R_n(f_{1:T}) + 2x + \frac{ 3 z_{\rho,n}  +  y }{n ( n-1) }.
\end{equation}
Fix $\eta>0$ and put:
$$ x = \frac{s(1+\rho)^2 L^2 \mathcal{V}}{2(n-1)-2s(1+\rho)\delta L} + \frac{T\mathcal{H} (\mathcal{F},\frac{1}{Ln}) + \log\left(\frac{4}{\eta}\right)}{s} $$
and
$ y =  2\delta\log\left(\frac{2}{\eta}\right) + \frac{ V_{(n)}}{2\delta }.$
Note that, plugged into~\eqref{proba-e}, these choices ensure $\mathbb{P}(\mathcal{E}^c)\leq \eta/2+\eta/2=\eta$.
Put
$$s= \frac{1}{(1+\rho)L}\sqrt{ (n-1) T\mathcal{H} (\mathcal{F},\frac{1}{Ln}) \Big/\mathcal{V} }.$$
As soon as $ 2s(1+\rho)\delta L \leq n-1$, that is actually ensured by the condition $ n \geq 1 +  4\delta^2 T \mathcal{H} (\mathcal{F},\frac{1}{Ln}) / \mathcal{V} $,
we have:
\begin{align*}
x & \leq \frac{s(1+\rho)^2 L^2 \mathcal{V}}{n-1} + \frac{T\mathcal{H} (\mathcal{F},\frac{1}{Ln}) + \log\left(\frac{4}{\eta}\right)}{s}
\\
& = 2(1+\rho) L \sqrt{ \frac{\mathcal{V}  T\mathcal{H} (\mathcal{F},\frac{1}{Ln})}{n-1}} \\
& \quad + (1+\rho)L \log\left(\frac{4}{\eta}\right) \sqrt{ \frac{\mathcal{V}}{(n-1)T\mathcal{H} (\mathcal{F},\frac{1}{Ln})}}.
\end{align*}
Pluging the expressions of $x$ and $y$ and the definition of $z_{\rho,n}$ into~\eqref{almost-done} gives:
\begin{multline*}
R_n(\hat{f}_{1:T}) \leq \min_{f_{1:T} \in\mathcal{F}^T } R_n(f_{1:T}) + 4(1+\rho) L \sqrt{ \frac{\mathcal{V} T\mathcal{H} (\mathcal{F},\frac{1}{Ln})}{n-1} }
\\
+ 2(1+\rho)L \log\left(\frac{4}{\eta}\right) \sqrt{ \frac{\mathcal{V}}{(n-1)T\mathcal{H} (\mathcal{F},\frac{1}{Ln})}}
\\
+ \frac{1}{n} \left[ 3 \frac{G_\epsilon(0) + \frac{G_{X_1}(0)}{n-1}}{1-\rho} + \frac{ 2\delta\log\left(\frac{2}{\eta}\right)  + \frac{V_{(n)} }{2\delta }}{n-1}\right]
\end{multline*}
which ends the proof.
\hfill \qed

\noindent\emph{Proof of Theorem~\ref{thm-selection}.} Fix $\epsilon>0$. For any $1\leq T\leq T_{\max}$ and $f_{1:T}=(f_1,\dots,f_T)\in\mathcal{F}^T$, chose $f_{1:T}^{\epsilon}=(f_1^\epsilon,\dots,f_T^{\epsilon})$ such that for any $i$, $\|f_{i}^{\epsilon}-f_{i}\|_{\sup}\leq \epsilon$.
Define the event
$$
\mathcal{A}= \left\{ \sum_{t=1}^{n-1}\|X_t\| \leq z_{\rho,n} + y \right\}
 \bigcap \bigcap_{T=1}^{T_{\max}} \left\{\sup_{f_{1:T}\in\mathcal{F}_\epsilon}|r_n(f_{1:T})-R_n(f_{1:T})| \leq x_{T}  \right\}
$$
where $z_{\rho,n}$ is defined as in the proof of Theorem~\ref{thm-cvg} and $x_1,\dots,x_{T_{\max}} > 0 $. We have, for any $s_1,\dots,s_{T_{\max}}< (n-1)/[L\delta(1+\rho)]$,
\begin{multline*}
\mathbb{P}\left( \mathcal{A}^c\right)\leq  \exp\left(\frac{   V_{(n)}}{4\delta^2}-\frac{y }{2\delta } \right)
\\
+2 \sum_{T=1}^{T_{\max}}\mathcal{N}(\mathcal{F},\epsilon)^T \exp\Biggl( \frac{s_T^2(1+\rho)^2 L^2 \mathcal{V}}{2(n-1)-2s_T(1+\rho)\delta L}
-s_T x_T \Biggr) \leq \eta,
\end{multline*}
the last inequality being ensured by the choice $y=2\delta\log(2/\eta) +   V_{(n)}/(2\delta)$ and, for any $T$,
\begin{align*}
x_{T} & = \frac{s_T(1+\rho)^2 L^2 \mathcal{V}}{2(n-1)-2s_T(1+\rho)\delta L} \\
& \quad \quad + \frac{T\mathcal{H} (\mathcal{F},\frac{1}{Ln}) + \log\left(4 T_{\max} /  \eta \right)}{s_T} ,\\
s_T & = \frac{1}{(1+\rho)L}\sqrt{\frac{(n-1)T\mathcal{H} (\mathcal{F},\frac{1}{Ln})}{\mathcal{V}} }.
\end{align*}
Note that this choice also leads to
$$
x_T \leq \frac{C_1}{2} \sqrt{ \frac{ T\mathcal{H} (\mathcal{F},\frac{1}{Ln})}{n-1}}
+ \frac{C_2}{2} \frac{\log\left(4 T_{\max}/\eta\right)}{\sqrt{n-1}}.
$$
On $\mathcal{A}$, we have $ R_n(\hat{f}_{1:\hat{T}}) \leq R_n(\hat{f}_{1:\hat{T}}^{\epsilon}) +\epsilon L z_{\rho,n}/(n-1)$, and
\begin{align*}
 R_n(\hat{f}_{1:\hat{T}})
 & \leq r_n(\hat{f}_{1:\hat{T}}^{\epsilon}) + x_{\hat{T}}  +\epsilon L \frac{z_{\rho,n} }{n-1} \\
& \leq r_n(\hat{f}_{1:\hat{T}}) + x_{\hat{T}}   + \epsilon L   \frac{2  z_{\rho,n}+y}{n-1}
\\
& \leq r_n(\hat{f}_{1:\hat{T}}) + \frac{C_1}{2} \sqrt{ \frac{ \hat{T}\mathcal{H} (\mathcal{F},\frac{1}{Ln})}{n-1}}
+ \frac{C_2}{2} \frac{\log\left(4 T_{\max}/\eta\right)}{\sqrt{n-1}}   + \epsilon L  \frac{ 2  z_{\rho,n}+y}{n-1}
\\
& = \min_{1\leq T\leq T_{\max}} \Biggr\{ r_n(\hat{f}_{1:T}) + \frac{C_1}{2} \sqrt{ \frac{ T \mathcal{H} (\mathcal{F},\frac{1}{Ln})}{n-1}} \Biggr\}
+ \frac{C_2}{2} \frac{\log\left(4 T_{\max}/\eta\right)}{\sqrt{n-1}}
\\ & \quad
+ \epsilon L \frac{2  z_{\rho,n}+y}{n-1}
\\
& \leq \min_{1\leq T\leq T_{\max}} \min_{f_{1:T} \in\mathcal{F}_{\epsilon}^T } \Biggl\{ r_n(f_{1:T}) + \frac{C_1}{2} \sqrt{ \frac{ T \mathcal{H} (\mathcal{F},\frac{1}{Ln})}{n-1}} \Biggr\}
+ \frac{C_2}{2} \frac{\log\left(4 T_{\max}/\eta\right)}{\sqrt{n-1}}
\\ & \quad
+ \epsilon L   \frac{2 z_{\rho,n} + y}{n-1}
\\
& \leq \min_{1\leq T\leq T_{\max}} \min_{f_{1:T} \in\mathcal{F}_{\epsilon}^T } \Biggl\{ R_n(f_{1:T}) +
C_1\sqrt{ \frac{ T \mathcal{H} (\mathcal{F},\frac{1}{Ln})}{n-1}} \Biggr\} +
\frac{C_2}{2} \frac{\log\left(4 T_{\max}/\eta\right)}{\sqrt{n-1}}
\\ & \quad
+ \epsilon L   \frac{2 z_{\rho,n} + y}{n-1}
\\
& \leq \min_{1\leq T\leq T_{\max}} \min_{f_{1:T} \in\mathcal{F}^T } \Biggl\{ R_n(f_{1:T}) + C_1\sqrt{ \frac{ T \mathcal{H} (\mathcal{F},\frac{1}{Ln})}{n-1}}\Biggr\} + \frac{C_2}{2} \frac{\log\left(4 T_{\max}/\eta\right)}{\sqrt{n-1}}  \\ & \quad
+ \epsilon L  \frac{ 3z_{\rho,n}+ y}{n-1}
\\
& \leq \inf_{1\leq T \leq T_{\max}} \inf_{f_{1:T}\in\mathcal{F}^T}
\Biggl[ R_n(f_{1:T}) +
C_1 \sqrt{ \frac{ T\mathcal{H} (\mathcal{F},\frac{1}{Ln})  }{n-1} }
+ C_2 \frac{ \log\left(4 T_{\max}/\eta\right)}{\sqrt{n-1}}
\\ & \quad
+ \frac{C_3}{n} \Biggr].\qquad\qquad\qquad\qquad\qed
\end{align*}

\bibliographystyle{apalike}
\setlength{\itemindent}{-\leftmargin}
\makeatletter\renewcommand{\@biblabel}[1]{}\makeatother


\end{document}